\renewcommand{\fnum@figure}{\textbf{\figurename~\thefigure}}
\renewcommand{\figurename}{Fig.}
\renewcommand\section{%
  \@startsection{section}{1}{0pt}%
  {-\baselineskip}{.2\baselineskip}%
  {\normalfont\normalsize\bfseries\raggedright\color{blue}}}
\renewcommand\subsection{%
  \@startsection{subsection}{2}{0pt}%
  {-\baselineskip}{.1\baselineskip}%
  {\normalfont\normalsize\bfseries\raggedright}}
\renewcommand\paragraph{%
  \@startsection{paragraph}{4}{0pt}%
  {3.25ex\@plus 1ex\@minus .2ex}{-1em}%
  {\normalfont\normalsize\bfseries}}
\theoremstyle{thmstyleone}%
\newtheorem{theorem}{Theorem}%
\newtheorem{proposition}[theorem]{Proposition}%
\newtheorem{lemma}{Lemma}
\theoremstyle{thmstyletwo}%
\theoremstyle{thmstylethree}%
\newcommand{\R}{\mathbb{R}}
\newcommand{\manifold}{\mathcal{M}}
\newcommand{\sphere}{\mathbb{S}}
\newcommand{\rbrac}[1]{\left(#1\right)}
\newcommand{\sbrac}[1]{\left[#1\right]}
\newcommand{\softmax}{\operatorname{softmax}}
\newcommand{\FAttn}{\operatorname{FAttn}}
\newcommand{\rownorm}{N_R}
\renewcommand{\d}{\mathrm{d}}
\newcommand{\Deltam}{\Delta_{\mathcal{M}}}
\newcommand{\Div}{\operatorname{div}}
\newcommand{\vol}{\mathrm{vol}}
\newcommand{\dman}{d_{\mathcal{M}}}
\newcommand{\bo}{\mathcal{O}}
\begin{document}

\title{Fractional neural attention for efficient multiscale sequence processing}

\author{Cheng Kevin Qu}
\thanks{These authors contributed equally to this work.}
\affiliation{School of Physics, University of Sydney, Sydney, NSW, Australia}
\author{Andrew Ly}
\thanks{These authors contributed equally to this work.}
\affiliation{School of Physics, University of Sydney, Sydney, NSW, Australia}
\author{Pulin Gong}
\email{pulin.gong@sydney.edu.au}
\affiliation{School of Physics, University of Sydney, Sydney, NSW, Australia}

\begin{abstract}
{Attention mechanisms underpin the computational power of Transformer models, which have achieved remarkable success across diverse domains. 
Yet understanding and extending the principles underlying self-attention remains a key challenge for advancing artificial intelligence. 
Drawing inspiration from the multiscale dynamics of biological attention and from dynamical systems theory, we introduce Fractional Neural Attention (FNA), a principled, neuroscience-inspired framework for multiscale information processing. 
FNA models token interactions through L\'evy diffusion governed by the fractional Laplacian, intrinsically realizing simultaneous short- and long-range dependencies across multiple scales. 
This mechanism yields greater expressivity and faster information mixing, advancing the foundational capacity of Transformers. 
Theoretically, we show that FNA’s dynamics are governed by the fractional diffusion equation, and that the resulting attention networks exhibit larger spectral gaps and shorter path lengths---mechanistic signatures of enhanced computational efficiency. 
Empirically, FNA achieves competitive text-classification performance even with a single layer and a single head; it also improves performance in image processing and neural machine translation. 
Finally, the diffusion map algorithm from geometric harmonics enables dimensionality reduction of FNA weights while preserving the intrinsic structure of embeddings and hidden states. 
Together, these results establish FNA as a principled mechanism connecting self-attention, stochastic dynamics, and geometry, providing an interpretable, biologically grounded foundation for powerful, neuroscience-inspired AI.
}
\end{abstract}

\maketitle

Transformers \cite{Vaswani2017} have delivered striking results on sequential data, spanning natural language \cite{Liu2019, Zhu2020} and time series \cite{Li2019,Zhou2022}, and have been successfully adapted to vision \cite{Dosovitskiy2020} and graph learning \cite{Yang2021}.
Their pervasive success is widely attributed to the self-attention mechanism that models the relationship between tokens.
Yet the principles that make self-attention effective remain largely unclear. 
Recent studies have recast token embeddings as spatial variables and layers as a time coordinate, yielding a continuous-time view of the limiting dynamics of self-attention \cite{Lu2019, Sander2022}. Notably, a bistochastic formulation of self-attention reduces to classical Brownian diffusion \cite{Sander2022}.
Furthermore, local-global hybrids of attention have been found to consistently improve long-sequence processing \cite{Zaheer2020, beltagy2020longformer}. 
This two-scale structure has largely been engineered through \emph{ad hoc} manipulation of sparsification windows, combining sliding windows with special global tokens or random links. 
A principled formulation of self-attention that inherently entails multiscale dynamics has been missing.

Neuroscience offers an instructive perspective. 
It has been shown that attentional ``spotlights'' exhibit local shifts interspersed with long-range jumps, with step sizes following heavy-tailed, power-law statistics \cite{Chen2022} (Fig.~\ref{fig:schematic}). 
These attention-sampling dynamics unfold across multiple, nested scales, and are well-characterized as fractional L{\'e}vy processes. 
Such properties account for diverse neurophysiological observations of attention sampling \cite{Chen2022} and confer computational advantages \cite{Qi2022}, enabling flexible sampling of complex sensory scenes such as natural environments rich in salient features. 
In contrast, Brownian-motion-based sampling is predominantly local (Fig.~\ref{fig:schematic}), due to the lack of long jumps.
Although neuroscience-inspired artificial intelligence (NeuroAI) is widely expected to be crucial for future advances \cite{Hassabis2017, Zador2023}, a neurally grounded attention rule that operationalizes realistic, multiscale computation within Transformer architectures remains unexplored.

In this study, we introduce a neuroscience-inspired attention mechanism, which we term fractional neural attention (FNA), that integrates L{\'e}vy diffusion into self-attention through a dynamical-systems formulation. 
FNA intrinsically realizes simultaneous short- and long-range interactions among tokens across multiple scales, yielding greater expressivity and faster information mixing, thus advancing the foundational capacity of Transformers beyond efficiency-oriented designs. 
To elucidate the theoretical framework of FNA, we first prove that FNA’s second-order dynamics are governed by the fractional Laplacian (Fig.~\ref{fig:schematic}) \cite{Lischke2020}, a scale-free pseudo-differential operator of order $\alpha \in [1,2]$ that is the infinitesimal generator of L{\'e}vy processes \cite{Applebaum_2009}. 
This yields an interpretable, geometry-aware account of attention as diffusion on a manifold, enabling the canonical use of diffusion maps for dimensionality reduction to obtain meaningful representations of the attention weights \cite{Coifman2006, Antil2021}. 
The parameter $\alpha$ controls locality versus non-locality, recovering the local Brownian limit at $\alpha = 2$ and inducing power-law tails for $\alpha < 2$ that support multiscale (i.e., scale-free) interactions among tokens in embedding space.

We then explain FNA's computational advantages using a network-theoretic analysis of the induced attention networks. 
Relative to standard self-attention, FNA produces larger spectral gaps and shorter path lengths, enabling faster information mixing and greater representational capacity. 
This analysis explains why FNA attains strong performance with substantially fewer layers---sometimes a single layer---than conventional Transformers. 
Finally, across text (IMDb), vision (CIFAR-10 with ViTs \cite{Dosovitskiy2020}) and machine translation (Multi30K En–De \cite{Elliott2016}), multiscale FNA matches or exceeds strong Transformer baselines. 
Together, these results establish a principled, multiscale attention mechanism that connects insights from neuroscience and dynamical systems to advance AI.

\begin{figure*}[t]
    \centering
    \includegraphics[scale=0.8]{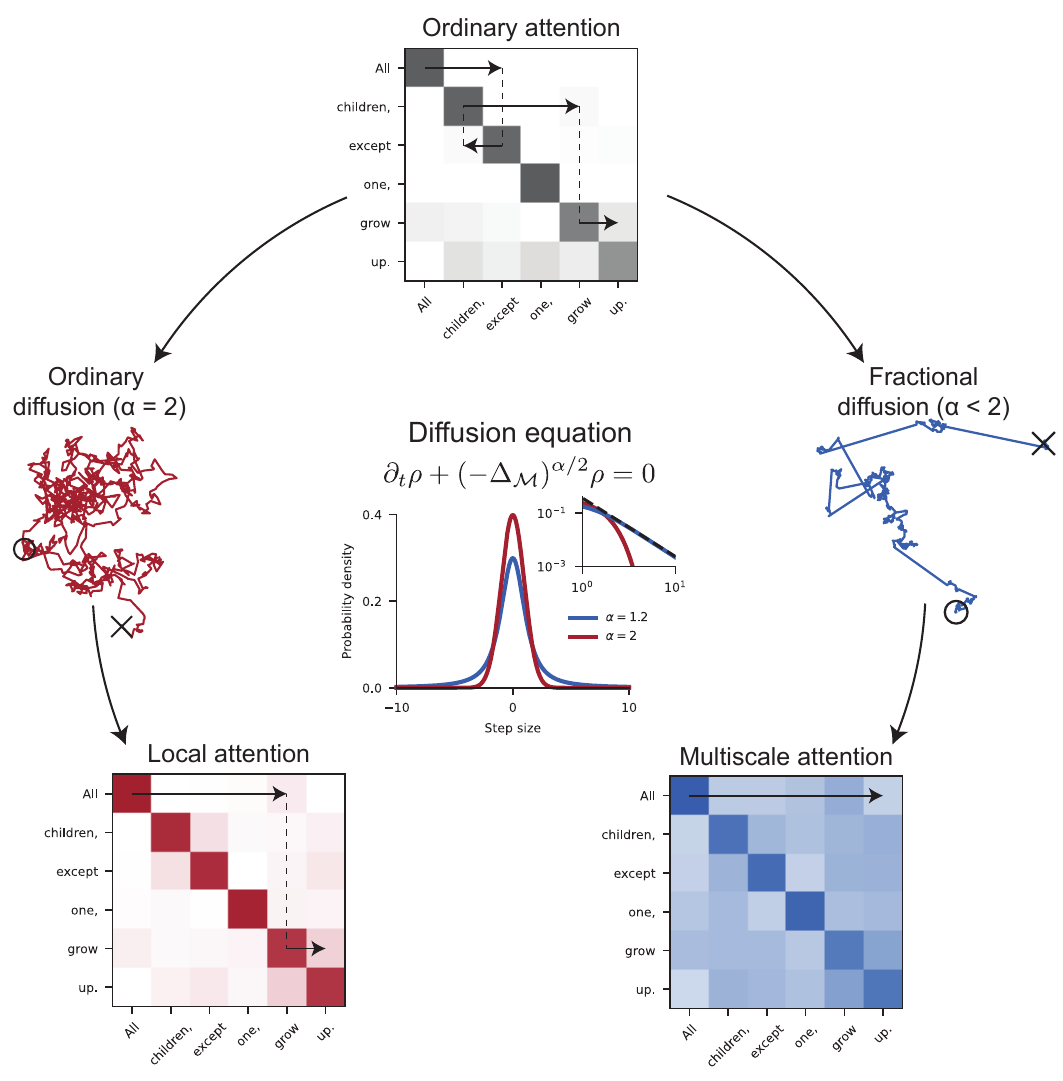} 
    \caption{\textbf{From fractional diffusion to attention.} 
    The fractional diffusion equation describes the density evolution of L\'evy processes for $\alpha < 2$. 
    When $\alpha = 2$, it reduces to the classical diffusion equation corresponding to Brownian motion. 
    The associated step-size distributions are heavy-tailed (i.e., multiscale) for $\alpha < 2$ and Gaussian for $\alpha = 2$.
    Inspired by the role of fractional diffusion in neurobiological attention \cite{Chen2022}, we incorporate these dynamics into Transformer self-attention to obtain fractional neural attention (FNA). 
    Multiscale FNA ($\alpha < 2$) enhances the expressivity of attention, enabling the first word of a sentence (e.g., ``All children, except one, grow up.''---Peter Pan) to attend to the last with a single ``step'' of the attention mechanism. In contrast, local attention ($\alpha = 2$) and standard attention typically require multiple steps. Steps are represented by arrows within the attention matrices.}
    \label{fig:schematic}
\end{figure*}

\section*{Theoretical framework of fractional neural attention (FNA)}  
We develop a theoretical framework for FNA through three steps: First, we dissect the self-attention mechanism of the vanilla Transformer as a continuous-time dynamical system, uncovering a fundamental connection to local diffusive behavior. 
Second, we generalize this to non-local, fractional diffusion by exploiting the fractional Laplacian operator, the infinitesimal generator of the symmetric $\alpha$-stable L{\'e}vy processes. 
Third, we integrate this fractional diffusion into attention to formulate FNA.

\subsection*{Self-attention as a dynamical system.} 
The original Transformer model employs self-attention in a feed-forward manner as the key structure for information processing \cite{Bahdanau2014,Vaswani2017,Lin2022}.
To summarize, the self-attention block in the $(l+1)^{\text{th}}$ layer receives as input a sequence of $n$ tokens embedded in a manifold such as Euclidean space, i.e., $\mathbf{X} \coloneqq (\mathbf{x}_1^l, \mathbf{x}_2^l, \cdots, \mathbf{x}_n^l) \in \mathbb{R}^{d \times n}$ where $\mathbf{x}_i^l \in \R^d$ for each $i$. 
It projects these embedding vectors to queries, keys and values via $\mathbf{q} = \mathbf{W}_Q \mathbf{x}$, $\mathbf{k} = \mathbf{W}_K \mathbf{x}$ and $\mathbf{v} = \mathbf{W}_V \mathbf{x}$, respectively, where $\mathbf{W}_Q, \mathbf{W}_K \in \mathbb{R}^{d_k \times d}$ and $\mathbf{W}_V \in \mathbb{R}^{d_v \times d}$ are learnable projection matrices. 
The self-attention block updates each token representation by aggregating information across the entire input sequence, producing the output embedding: \cite{Vuckovic2020,Vuckovic2021,Sander2022,Geshkovski2023b}:
\begin{equation} \label{eq:sa_resnet}
\mathbf{x}_i^{l+1} = \mathbf{x}_i^{l} + \sum_{j = 1}^n A_{ij} \mathbf{W}_V \mathbf{x}_j^l,
\end{equation} 
where $\mathbf{A} = \softmax(\mathbf{C}) \coloneqq N_R(\exp[\mathbf{C}]) \in \mathbb{R}^{n \times n}$.
The exponential function $\exp[\cdot]$ is applied element-wise to the similarity matrix $\mathbf{C} \in \mathbb{R}^{n \times n}$, whose entries are calculated as query-key dot products $C_{ij} \coloneqq \mathbf{q}_i^{\top} \mathbf{k}_j = \mathbf{x}_i^{\top} \mathbf{W}_Q^{\top} \mathbf{W}_K \mathbf{x}_j$, and $\rownorm(\cdot)$ denotes row normalization. 
The resulting attention weight matrix $\mathbf{A}$ is right-stochastic: each row sums to 1, and each entry $A_{ij}$ can be interpreted as the normalized weight assigned by the $i^{\text{th}}$ query to the $j^{\text{th}}$ key, applied to value $\textbf{v}_j \coloneqq \mathbf{W}_V \mathbf{x}_j$ when forming $\mathbf{x}_i^{l+1}$.
In other words, the attention weights encode single-step transition probabilities over the token sequence.
For notational convenience, we collect queries, key and values into matrices $\mathbf{Q} \coloneqq (\mathbf{q}_1, \cdots, \mathbf{q}_n)$, $\mathbf{K} \coloneqq (\mathbf{k}_1, \cdots, \mathbf{k}_n)$ and $\mathbf{V} \coloneqq (\mathbf{v}_1, \cdots, \mathbf{v}_n)$, respectively.
In addition to the self-attention mechanism, the residual connections are a crucial component of the Transformer architecture. 
They prevent representational degradation within the attention block; without them, the model output suffers from rank-collapse at a double-exponential rate with respect to depth \cite{Dong2021}.
To guarantee the well-definedness of equation~\ref{eq:sa_resnet}, we assume $d_k = d_v = d$ \cite{Vaswani2017}. 

Following recent studies \cite{Lu2019,Sander2022,Geshkovski2023b}, we now interpret the self-attention mechanism as a dynamical system.
The residual form of equation~\ref{eq:sa_resnet} corresponds to an Euler discretization with step-size 1 of the ordinary differential equation (ODE)
\begin{equation} \label{eq:resnet_ode}
\dot{\mathbf{x}}_i(t) = T(\mathbf{x}_i(t)) \text{ for all } i,
\end{equation}
where the token embedding $\mathbf{x}_i$ is treated as spatial variable and the layer $l$ in equation~\ref{eq:sa_resnet} is viewed as a time variable.
This continuous-time perspective parallels formulations in the framework of neural ODEs \cite{Chen2018}.
Thus, we replace $l$ with $t$ to highlight this interpretation. 
Furthermore, we construct the push-forward map $T$ as
\begin{equation} \label{eq:push_forward}
T_{\mu,t}(\mathbf{x}) = \frac{1}{t} \int_{\R^d} k_{t}^{\text{SA}}(\mathbf{x}, \mathbf{x}^{\prime}) \mathbf{W}_V \mathbf{x}^{\prime} \d \mu(\mathbf{x}^{\prime}).
\end{equation}
The superscript SA represents self-attention and $k_{t}^{\text{SA}}(\mathbf{x}, \mathbf{x}') \coloneqq \frac{\exp(\mathbf{x}^{\top} \mathbf{W}_Q^{\top} \mathbf{W}_K \mathbf{x}' / t)}{\int_{\R^d} \exp(\mathbf{x}^{\top} \mathbf{W}_Q^{\top} \mathbf{W}_K \mathbf{x}'/t) \d \mu(\mathbf{x}')}$ is the associated attention kernel; $t$ corresponds to time and $\mu$ is the distribution of the token embeddings.
Each token $\mathbf{x}_i$ can be viewed as a particle whose position evolves according to equation~\ref{eq:resnet_ode}. 
We thus use the terms `token' and `particle' interchangeably throughout the remainder of the text \cite{Lu2019}.
Implicitly, we take the infinite particle limit $n \rightarrow \infty$ to obtain the integral expression of the summation term in equation~\ref{eq:sa_resnet}.
The map $T$ depends on $\mu \in \mathcal{P}(\R^d)$, the probability measure supported on $\mathbb{R}^d$ from which the embeddings $\mathbf{x}'$ are sampled, which can be identified with a density function $\rho$ of the particle positions: $\d \mu(\mathbf{x}) = \rho(\mathbf{x}) \d \mathbf{x}$.
Taking the limit $t \rightarrow 0$, which physically represents an infinitesimal step-size, allows the ODE (equation~\ref{eq:resnet_ode}) to be written as a continuity equation describing the second-order dynamics:
\begin{equation} \label{eq:continuity_eq_1}
\partial_t \mu + \Div(\mu \overline{T}_{\mu}) = 0, \quad (t,\mathbf{x}) \in \R^+ \times \R^d 
\end{equation}
where $\overline{T}_{\mu} \coloneqq T_{\mu, 0}$.
This result is obtained by replacing the dot-product between the queries and keys in $k_t^{\text{SA}}$ with their squared Euclidean distance in equation~\ref{eq:push_forward} \cite{Renardy2004,Sander2022}.
Upon setting $t = 1$ and $\mu \coloneqq \frac{1}{n} \delta_{\mathbf{x}_i(0)}$ in equation~\ref{eq:push_forward}, the original self-attention update (equation~\ref{eq:sa_resnet}) can be recovered through the Euler discretization of equation~\ref{eq:resnet_ode} due to the permutation-equivariance of self-attention \cite{Vuckovic2020,Vuckovic2021,Koubbi2024}.

The continuity equation provides a deeper understanding of how the probability measure over particles evolves.
For instance, a recent work studied a bistochastic form of self-attention, in which an iterative scheme of column and row normalizations is applied to the attention matrix $\mathbf{A}$ \cite{Sander2022}.
Under this construction and certain conditions on the projection matrices (i.e., $\mathbf{W}_K^{\top} \mathbf{W}_Q = \mathbf{W}_Q^{\top} \mathbf{W}_K = -\mathbf{W}_V$), equation~\ref{eq:continuity_eq_1} reduces to the spatially local diffusion equation:
\begin{align} \label{eq:heat_eq}
\partial_t \rho  - \Delta \rho = 0,   
\end{align}
where $\Delta$ is the standard Laplacian operator and $\rho \coloneqq \rho(t,\mathbf{x}) \in \R^+ \times \R^d$ is the particle density.
With modern deep models such as GPT-3, employing up to 96 layers \cite{Brown2020}, understanding the layer-wise (continuum) limit of self-attention in equation~\ref{eq:sa_resnet} becomes increasingly important.
Adopting a dynamical-systems perspective clarifies how the probability measure governing the particles evolves across layers.
Within this framework, we show that self-attention naturally admits a fractional diffusion formulation.

\subsection*{Fractional diffusion.} 
We now establish the spatially fractional generalization of the diffusion equation in order to guide its integration within self-attention.
The generalization of equation~\ref{eq:heat_eq} is the fractional diffusion equation on a Riemannian manifold $\manifold$:
\begin{equation} \label{eq:frac_heat_eq}
\partial_t \rho + (-\Deltam)^{\alpha/2} \rho = 0,
\end{equation}
where the fractional Laplacian $(-\Deltam)^{\alpha/2}$ is a non-local, pseudo-differential operator, $\alpha \in (0, d_{\mathcal{M}}+1]$ is its fractional order and $d_{\mathcal{M}}$ is the dimension of the manifold $\mathcal{M}$.
There are several equivalent definitions of the fractional Laplacian \cite{Kwasnicki2017, Lischke2020}. 
In particular, for $\manifold = \R^d$, the fractional Laplacian can be defined as a singular integral:
\begin{equation} \label{eq:frac_laplacian_euclidean}
(-\Delta)^{\alpha/2} u(\mathbf{x})
= c_{d, \alpha} \lim _{\varepsilon \downarrow 0} \int_{\mathbb{R}^d \backslash B(\mathbf{x}, \varepsilon)} \frac{u(\mathbf{x}) - u(\mathbf{y})}{\Vert\mathbf{x}-\mathbf{y}\Vert^{d+\alpha}} \d \mathbf{y},
\end{equation}    
where $\Delta \coloneqq \Delta_{\R^d}$, $c_{d, \alpha} \coloneqq \frac{ \alpha 2^{\alpha-1} \Gamma\left(\frac{d+\alpha}{2}\right)}{\pi^{d/2} \Gamma(1-\alpha/2)}$ is a normalization constant, $\Gamma$ is the Gamma function and $B(\mathbf{x}, \varepsilon)$ is the open ball of radius $\varepsilon$ centered at $\mathbf{x}$.
When $\alpha \in [2, d_{\mathcal{M}} + 1]$, the operator reduces to the standard (local) Laplacian: the value of $\Delta f(x)$ depends only on an arbitrarily small neighborhood of $f$ around $x$. 
In contrast, for $\alpha \in (0,2)$, the value of the fractional Laplacian $(-\Delta)^{\alpha/2} f(x)$ depends on the entire domain according to the power-law weighting in the integral of equation~\ref{eq:frac_laplacian_euclidean}. 
See ``Methods'' for a spectral definition of the fractional Laplacian for closed and compact manifolds.

The solution to the fractional diffusion equation is the isotropic $\alpha$-scale invariant heat kernel $k_t(\mathbf{x}, \mathbf{y})$.
The heat kernel depends on the geodesic distance $d_g(\cdot, \cdot)$ on the manifold $\mathcal{M}$ (see equation~\ref{eq:g_dist} in ``Methods'') and is asymptotically bounded by:
\begin{equation} \label{eq:beta_scale_kernel}
\frac{c_1}{t^{d / \alpha}} \Phi\left(C_1 \frac{d_g(\mathbf{x},\mathbf{y})}{t^{1 / \alpha}}\right) \leq k_t(\mathbf{x}, \mathbf{y}) \leq \frac{c_2}{t^{d / \alpha}} \Phi\left(C_2 \frac{d_g(\mathbf{x},\mathbf{y})}{t^{1 / \alpha}}\right),
\end{equation}
where $c_1, c_2, C_1, C_2$ are all positive constants and the decay function $\Phi: [0, \infty) \rightarrow [0, \infty)$ is monotonically decreasing.
Such heat kernels $k$ exhibit a dichotomy \cite{Grigor2008}:
\begin{enumerate}
\item If  $\alpha \in [2,d_{\mathcal{M}}+1]$, then $k$ is local and its decay is bounded by the exponential
\begin{equation} \label{eq:hk_dichotomy_1}
\hspace{-0.4cm} \Phi_{\alpha}(z) = \exp\rbrac{-z^{\frac{\alpha}{\alpha-1}}};
\end{equation}
or 
\item If $\alpha \in (0,2)$, then $k$ is non-local and its decay is bounded by the power law
\begin{equation} \label{eq:hk_dichotomy_2}
\Phi_{\alpha}(z) = (1+z)^{-(d_{\mathcal{M}}+\alpha)}.
\end{equation}
\end{enumerate}
Henceforth, we consider $\alpha \in [1,2]$ and $\mathcal{M} = \mathbb{R}^d$ where $d_g(\mathbf{x},\mathbf{y}) = \Vert \mathbf{x} - \mathbf{y} \Vert$. 
See ``Methods'' for further details on the fractional Laplacian, its connection to the $\alpha$-scale invariant heat kernel (equation~\ref{eq:beta_scale_kernel}) and its generalization to non-Euclidean manifolds.

We next exemplify the dichotomy of local and multiscale dynamics encompassed within the fractional diffusion equation.
For $\alpha = 2$, where equation~\ref{eq:heat_eq} becomes the classical diffusion equation, the solution is the well-known Gaussian heat kernel:
\begin{equation} \label{eq:local_heat_kernel_euclidean}
k_t(\mathbf{x},\mathbf{y}) \coloneqq \frac{1}{(4\pi t)^{d / 2}} e^{-\frac{\Vert \mathbf{x} - \mathbf{y} \Vert^2}{4 t}},
\end{equation}
which decays exponentially in the squared Euclidean distance.
This describes the probability density associated with the classical Brownian motion (Fig.~\ref{fig:schematic}). 
In contrast, for $\alpha < 2$, the solution yields disparate characteristics.
For example, it simplifies to the well-known Poisson kernel when $\alpha = 1$:
\begin{equation}
k_t(\mathbf{x}, \mathbf{y})=\frac{\tilde{c}_d}{t^d}\left(1+\frac{\Vert \mathbf{x}-\mathbf{y} \Vert}{t^2}\right)^{-\frac{d+1}{2}},
\end{equation}
where $\tilde{c}_d=\Gamma\left(\frac{d+1}{2}\right) / \pi^{(d+1) / 2}$.  
Such power-law heat kernels describe the probability densities of symmetric $\alpha$-stable ($S\alpha S$) L\'evy processes (see ``Methods''). 
The increments of L\'evy processes exhibit an asymptotic power-law decay, so large jumps occur with far higher probability than under Brownian motion (Fig.~\ref{fig:schematic}). 
Such heavy-tailed dynamics are efficient for neural circuits implementing probabilistic computation \cite{Qi2022} and for spatial-visual attention sampling \cite{Chen2022}.
They are also consistent with observed efficient search strategies in animal foraging \cite{Viswanathan1999,Radons2008}

\subsection*{Fractional neural attention.} \label{sec:FNA}
We implement FNA through the following steps:

\begin{enumerate}
    \item For all pairs of tokens $\mathbf{x}_i, \mathbf{x}_j \in \mathbb{R}^d$ in the sequence, calculate the fractional attention score $\tilde{C}_{ij}$ as
    \begin{equation} \label{eq:frac_attn_score}
    \tilde{C}_{ij} = \Phi_{\alpha}\rbrac{ \frac{\Vert \mathbf{W}_Q \mathbf{x}_i - \mathbf{W}_K \mathbf{x}_j \Vert}{\kappa}},
    \end{equation}
    where $\Phi_{\alpha}$ is given by equation~\ref{eq:hk_dichotomy_1} or \ref{eq:hk_dichotomy_2} depending on $\alpha$ and $\kappa$ is a distance scaling constant.  
    
    \vspace{0.5em}

    We emphasize a key change to conventional self-attention:
    To exploit $\Phi_{\alpha}$, we replace the dot-product between the queries and keys by their Euclidean (or $L^2$) distance.
    This operation decouples the attention score from the  softmax’s implicit exponential form, which has a fixed scale, and enables power-law (scale free) attention when $\alpha < 2$.
    Distance-based self-attention has been previously proposed, primarily to establish Lipschitz continuity \cite{Kim2021}; our construction extends this approach and recovers it as a special case when $\alpha = 2$.
    \item Compute the output of the FNA block as 
    \begin{equation} \label{eq:frac_attn_weights}
    \FAttn(\mathbf{X}) \coloneqq \tilde{\mathbf{A}} \mathbf{V}^{\top}, \ \tilde{\mathbf{A}} \coloneqq \rownorm(\tilde{\mathbf{C}}), 
    \end{equation}          
    where $\tilde{\mathbf{C}}$ is the matrix of attention scores (equation~\ref{eq:frac_attn_score}).
\end{enumerate}

For $\alpha = 2$, we set $\kappa = \sqrt{d_H}$ where $d_H = d/H$, consistent with common practice \cite{Vaswani2017}. 
For $\alpha < 2$, we use different $\kappa$ due to the explicit dependence on $d$ of the non-local kernel. 
We note that $\dman = d_H$ in equation~\ref{eq:hk_dichotomy_2} and $\alpha$ is specified prior to training as a hyperparameter.
For further details regarding the implementation of FNA, see ``Methods''.
Note that $L^2$-normalization can also be applied to the embeddings prior to the attention map, naturally injecting them onto $\sphere^{d-1}$ (see Appendix~\ref{supp_sec:spherical_fna} for the setup and additional results in Fig.~\ref{fig:train_imdb_spherical}).

\begin{figure*}[t]
\centering
\includegraphics{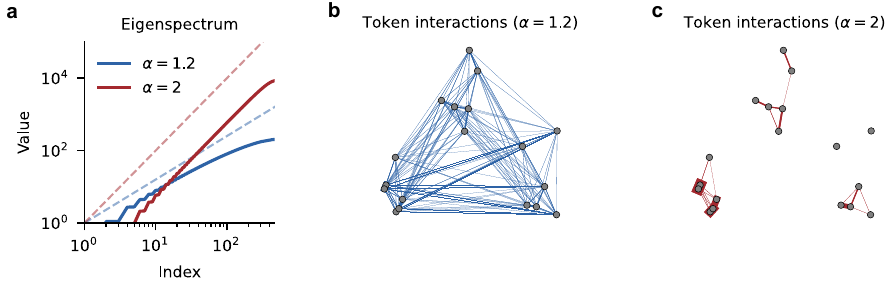}
\caption{\label{fig:fna_token_and_spectrum} 
\textbf{FNA token interactions and spectral characteristics.}
\textbf{a} Solid lines show eigenvalues of the FNA attention weight matrix, ordered from small to large. 
Dashed lines represent eye guides with slopes of $j^{1.2}$ (blue) and $j^2$ (red) respectively where $j$ corresponds to the eigenvalue index. 
\textbf{b} The gray-shaded circles represent the randomly sampled token embeddings.
The thickness of the blue lines reflects the strength of the fractional attention weights, between connected queries and keys, scaled by a constant.
Only connection strengths above $3.12 \times 10^{-5}$ are presented.
\textbf{c} Same as in \textbf{b} but for $\alpha = 2$ with the red lines representing the attention weight strengths.
}
\end{figure*}

\subsection*{Operator limit of FNA.}

We now demonstrate that FNA induces the desired multiscale L{\'e}vy dynamics by verifying that the infinitesimal generator of FNA converges to the fractional Laplacian. 
Paralleling the dynamical-systems formulation of ordinary attention (equation~\ref{eq:push_forward}), the infinitesimal generator of FNA dynamics is 
\begin{equation}
    \mathcal{H}^{\text{FNA}}[f] \coloneqq \lim_{t \rightarrow 0} \frac{T_{\mu,t}^{\text{FNA}}[f] - f}{t}, \label{eq:inf_gen_FNA}
\end{equation}
where
\begin{equation} \label{eq:semigroup}
T_{\mu,t}^{\text{FNA}}[f](\mathbf{x}) = \int_{\mathbb{R}^d} k_{t}^{\text{FNA}}(\mathbf{x},\mathbf{y}) f(\mathbf{y}) \rho(\mathbf{y}) \d \vol_{\mathbf{y}}   
\end{equation}
is the push-forward map and 
\begin{equation} \label{eq:fna_kernel}
k_{t}^{\text{FNA}}(\mathbf{x}, \mathbf{x}') \coloneqq \frac{\Phi_{\alpha}(\Vert \mathbf{W}_Q \mathbf{x} - \mathbf{W}_K \mathbf{y} \Vert / t^{1/\alpha})}{\int_{\R^d} \Phi_{\alpha}(\Vert \mathbf{W}_Q \mathbf{x} - \mathbf{W}_K \mathbf{y} \Vert / t^{1/\alpha}) \rho(\mathbf{y}) \d \vol_{\mathbf{y}}}
\end{equation}
is the FNA kernel. 
The connection between the infinitesimal generator $\mathcal{H}^{\text{FNA}}$ and the fractional Laplacian $(-\Delta)^{\alpha/2}$ relies on the isometry of queries and keys (Theorem~\ref{thm:fna_pde}). 
The push-forward map (equation~\ref{eq:semigroup}) represents the spatially continuous analog of the FNA weights (equation~\ref{eq:frac_attn_weights}) in the limit of $n \rightarrow \infty$ (see Appendix~\ref{supp_sec:dm_algorithn_continuous}).
We proceed by showing how to guarantee these properties in the practical implementation of FNA.

Both the original attention \cite{Vaswani2017} and our FNA apply linear projections of embedding vectors $\mathbf{x}$ to obtain queries and keys: $\mathbf{q} = \iota_Q(\mathbf{x}) = \mathbf{W}_Q \mathbf{x}$ and $\mathbf{k} = \iota_K(\mathbf{x}) = \mathbf{W}_K \mathbf{x}$, respectively.
For arbitrary $\mathbf{W}$, the projection $\iota$ is generally not an isometry because the geodesic distance between an arbitrary pair of tokens is not necessarily preserved.
For $\manifold = \R^d$, an isometric projection $\iota$ can be realized under the following assumptions:

\textbf{Assumption 1} (orthogonality): Assume single-head attention $H = 1$ and orthogonal query and key projection matrices $\mathbf{W}_{Q,K} \in O(d)$.
The orthogonality of a weight matrix can be enforced during training through constrained optimization methods for matrix manifolds \cite{LezcanoCasado2019}.
This condition has also been used in other Transformer architectures to maintain the token embedding manifold structure and thus preserve the injectivity of the model input-output map \cite{sengupta2023}.

\textbf{Assumption 2} (symmetry): Assume weight-tying between the queries and keys ($\textbf{Q} = \textbf{K}$, or equivalently $\mathbf{W}_Q = \mathbf{W}_K$). \label{assumption:2}
This condition has motivated the development of new architectures by linking the attention kernel $k^{\text{SA}}$ to the Gaussian kernel in \cite{Chen2021}.
Weight-tying between $\mathbf{W}_{Q,K}$ in our case is needed only for the derivation of FNA’s fractional diffusion limit; it is not required for general FNA implementations.
In recent studies \cite{Geshkovski2023b}, various assumptions on the keys and queries have been applied to understand self-attention, including $\mathbf{Q}^{\top} \mathbf{K} \succ 0$ (positive-definiteness) and $\mathbf{Q}^{\top} \mathbf{K} = \mathbf{I}$.

When both assumptions are satisfied, a meaningful parameter reduction of the original self-attention (equation~\ref{eq:sa_resnet}) and FNA is possible through the following proposition (see ``Methods'' for proof). 

\begin{proposition} \label{prop:distance_preserving}
Assume $\manifold = \R^d$ or $\sphere^{d - 1}$, single-head attention $H = 1$ and $\mathbf{W}_{Q,K} \in O(d)$ (i.e., Assumption 1).
Then, self-attention (equation~\ref{eq:sa_resnet}) and FNA (equation~\ref{eq:frac_attn_score}) with query and key projection matrices $\mathbf{W}_Q$ and $\mathbf{W}_K$ are equivalent to using an alternative pair of query/key projection matrices $\tilde{\mathbf{W}}_Q = \mathbf{I}$ and $\tilde{\mathbf{W}}_K = \mathbf{W}_Q^{\top} \mathbf{W}_K \in O(d)$.
When Assumption 2 is further satisfied, $\tilde{\mathbf{W}}_Q = \tilde{\mathbf{W}}_K = \mathbf{I}_d$.
\end{proposition}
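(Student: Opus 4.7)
My plan is to verify that replacing the projection pair $(\mathbf{W}_Q, \mathbf{W}_K)$ by $(\tilde{\mathbf{W}}_Q, \tilde{\mathbf{W}}_K) = (\mathbf{I}, \mathbf{W}_Q^{\top}\mathbf{W}_K)$ leaves the attention-weight matrix $\mathbf{A}$ (for standard self-attention) and $\tilde{\mathbf{A}}$ (for FNA) pointwise unchanged, so that the full update in equation~\ref{eq:sa_resnet} and the output in equation~\ref{eq:frac_attn_weights} are identical under either parametrization (the value projection $\mathbf{W}_V$ plays no role in this step, so it is carried through untouched). Since softmax and $N_R$ are deterministic functions of $\mathbf{C}$ and $\tilde{\mathbf{C}}$, it suffices to check equality at the level of entries $C_{ij}$ and $\tilde{C}_{ij}$.

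First I would handle the self-attention case, which is purely algebraic: $C_{ij} = \mathbf{x}_i^{\top} \mathbf{W}_Q^{\top}\mathbf{W}_K \mathbf{x}_j = \mathbf{x}_i^{\top}\mathbf{I}^{\top}(\mathbf{W}_Q^{\top}\mathbf{W}_K)\mathbf{x}_j$, so the similarity depends only on the product $\mathbf{W}_Q^{\top}\mathbf{W}_K$, and orthogonality is not even required here. The FNA case is where Assumption~1 enters: starting from $\Vert \mathbf{W}_Q\mathbf{x}_i - \mathbf{W}_K\mathbf{x}_j\Vert$, I would apply $\mathbf{W}_Q^{\top}$ inside the norm and use two consequences of $\mathbf{W}_Q \in O(d)$, namely that $\mathbf{W}_Q^{\top}$ is itself an isometry of $\R^d$ (preserving the Euclidean norm) and that $\mathbf{W}_Q^{\top}\mathbf{W}_Q = \mathbf{I}_d$, to obtain $\Vert \mathbf{x}_i - \mathbf{W}_Q^{\top}\mathbf{W}_K\mathbf{x}_j\Vert$. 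This is precisely the FNA score built from $(\tilde{\mathbf{W}}_Q, \tilde{\mathbf{W}}_K)$, so $\tilde{C}_{ij}$ is preserved as well.

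Next I would verify the claimed ambient structure $\tilde{\mathbf{W}}_K \in O(d)$ by the direct computation $(\mathbf{W}_Q^{\top}\mathbf{W}_K)^{\top}(\mathbf{W}_Q^{\top}\mathbf{W}_K) = \mathbf{W}_K^{\top}(\mathbf{W}_Q\mathbf{W}_Q^{\top})\mathbf{W}_K = \mathbf{W}_K^{\top}\mathbf{W}_K = \mathbf{I}_d$, i.e., closure of the orthogonal group under multiplication. For $\manifold = \sphere^{d-1}$, I would additionally remark that orthogonal maps preserve the unit sphere, so both the original and reparametrized projections map the input manifold to itself; the distance argument then extends verbatim since the chordal distance on $\sphere^{d-1}$ inherited from $\R^d$ is what enters the FNA kernel. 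Under the added Assumption~2, $\mathbf{W}_Q = \mathbf{W}_K$ plugs in to give $\tilde{\mathbf{W}}_K = \mathbf{W}_Q^{\top}\mathbf{W}_Q = \mathbf{I}_d$, completing the reduction.

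There is no real obstacle here — the content is essentially left-invariance of the similarity/distance under the orthogonal action — but the one place to be careful is to invoke \emph{both} properties of $\mathbf{W}_Q \in O(d)$ (norm preservation and $\mathbf{W}_Q^{\top}\mathbf{W}_Q = \mathbf{I}$) in the FNA step, since unitarity alone would not let me pull $\mathbf{W}_Q^{\top}$ inside the norm and then cancel against $\mathbf{W}_Q$ in one stroke. I would also flag that this argument is genuinely restricted to $H = 1$: with multiple heads, each head carries its own orthogonal pair $(\mathbf{W}_Q^h, \mathbf{W}_K^h)$ but the post-attention concatenation and output projection couple the heads, so one cannot simultaneously reduce every head to $(\mathbf{I}, \mathbf{W}_{Q,h}^{\top}\mathbf{W}_{K,h})$ without altering the downstream mixing.
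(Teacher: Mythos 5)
Your proposal is correct and follows essentially the same route as the paper: the paper expands $\Vert \mathbf{W}_Q\mathbf{x}_i - \mathbf{W}_K\mathbf{x}_j\Vert^2$ and uses $\Vert\mathbf{W}_Q\mathbf{x}_i\Vert = \Vert\mathbf{x}_i\Vert$, $\Vert\mathbf{W}_K\mathbf{x}_j\Vert = \Vert\mathbf{x}_j\Vert$ to arrive at $\Vert \mathbf{x}_i - \mathbf{W}_Q^{\top}\mathbf{W}_K\mathbf{x}_j\Vert$, which is the same computation as your pulling $\mathbf{W}_Q^{\top}$ inside the norm, and the dot-product and Assumption~2 steps coincide. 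The only minor point is that the paper's spherical variant uses the geodesic distance $\cos^{-1}(\mathbf{Q}_i^{\top}\mathbf{K}_j)$ rather than the chordal distance you mention, but since either is a function of the dot product $\mathbf{x}_i^{\top}\mathbf{W}_Q^{\top}\mathbf{W}_K\mathbf{x}_j$ alone, which is manifestly preserved by the reparametrization, your argument goes through unchanged.
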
 

Furthermore, the infinitesimal generator of FNA $\mathcal{H}^{\text{FNA}}$ (equation~\ref{eq:inf_gen_FNA}) converges to the fractional Laplacian, resulting in multiscale L{\'e}vy dynamics, as shown in the following theorem:

\begin{theorem}[PDE for FNA] \label{thm:fna_pde}
Let $\mu \in \mathcal{P}(\manifold)$ be a probability measure supported on $\manifold = \R^d$ or $\sphere^{d-1}$ with density $\rho \in \mathcal{C}^3(\manifold)$.
Without loss of generality, set $\kappa = 1$. Under Assumptions 1 and 2, as $t \rightarrow 0$, the infinitesimal generator of FNA converges to the operator
\begin{equation} \label{eq:fns_infinitesimal_generator}
\mathcal{H}[f] = \frac{(-\Delta_{\manifold})^{\alpha/2} (f\rho) - f (-\Delta_{\manifold})^{\alpha/2} \rho }{\rho}
\end{equation}
in $L^2$. For a uniform grid on $\manifold$, $\mathcal{H}[f] = (-\Delta_{\manifold})^{\alpha/2} f$, recovering the fractional diffusion as in equation~\ref{eq:frac_heat_eq}.
\end{theorem}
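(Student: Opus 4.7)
First I would invoke Proposition~\ref{prop:distance_preserving}, which under Assumptions 1 and 2 collapses both projection matrices to $\mathbf{I}_d$. The FNA kernel in equation~\ref{eq:fna_kernel} then depends only on $\Vert \mathbf{x}-\mathbf{y}\Vert$, which on $\manifold = \R^d$ coincides with the geodesic distance $d_g(\mathbf{x},\mathbf{y})$ (and on $\sphere^{d-1}$ agrees with it up to controlled curvature corrections). This isometric reduction is precisely what allows the push-forward map to couple to the intrinsic fractional Laplacian on $\manifold$.

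\textbf{Small-time asymptotics of the profile kernel.} Define the rescaled profile $K_t(\mathbf{x},\mathbf{y}) = t^{-\dman/\alpha}\Phi_\alpha(d_g(\mathbf{x},\mathbf{y})/t^{1/\alpha})$. By the dichotomy in equation~\ref{eq:beta_scale_kernel}, $K_t$ carries the same diffusive scaling $t^{1/\alpha}$ and asymptotic power-law tail as the $\alpha$-stable heat kernel. The central lemma I would establish is
\[
\int_{\manifold} K_t(\mathbf{x},\mathbf{y})\,g(\mathbf{y})\,\d\vol_{\mathbf{y}} = Z\,g(\mathbf{x}) - tZ\,(-\Deltam)^{\alpha/2}g(\mathbf{x}) + o(t)
\]
for $g \in \mathcal{C}^3(\manifold)$, where $Z$ is determined by the profile normalization (any constant mismatch with the canonical Lévy heat kernel cancels in the ratio below). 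To prove this, rescale $\mathbf{z} = (\mathbf{y}-\mathbf{x})/t^{1/\alpha}$ and split into a near-field ($\Vert\mathbf{z}\Vert\le r$) and a far-field ($\Vert\mathbf{z}\Vert > r$) region. In the near field a Taylor expansion of $g$ kills the first-order term by radial symmetry of $\Phi_\alpha$, while the second-order term is $O(t^{2/\alpha}) = o(t)$ for $\alpha < 2$. In the far field the tail $\Phi_\alpha(\Vert\mathbf{z}\Vert)\sim\Vert\mathbf{z}\Vert^{-(\dman+\alpha)}$ changes variables back and matches the singular weight $\Vert\mathbf{x}-\mathbf{y}\Vert^{-(\dman+\alpha)}$ in the integral definition of $(-\Delta)^{\alpha/2}$ (equation~\ref{eq:frac_laplacian_euclidean}), producing exactly the claimed first-order correction as $r\to\infty$ and $t\to 0$. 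On $\sphere^{d-1}$ I would substitute the spectral definition of $(-\Deltam)^{\alpha/2}$ referenced in ``Methods'' and absorb the chord-to-geodesic discrepancy into the remainder.

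\textbf{Ratio expansion and identification of $\mathcal{H}$.} The $t^{-\dman/\alpha}$ prefactors cancel between the numerator and denominator of $T_{\mu,t}^{\text{FNA}}[f](\mathbf{x})$, so applying the central lemma with $g = f\rho$ and $g = \rho$ yields
\[
T_{\mu,t}^{\text{FNA}}[f](\mathbf{x}) = \frac{Z(f\rho)(\mathbf{x}) - tZ(-\Deltam)^{\alpha/2}(f\rho)(\mathbf{x}) + o(t)}{Z\rho(\mathbf{x}) - tZ(-\Deltam)^{\alpha/2}\rho(\mathbf{x}) + o(t)}.
\]
Expanding $(c - td)^{-1} = c^{-1} + tdc^{-2} + o(t)$ in the denominator, multiplying out, and retaining only terms through first order in $t$ gives
\[
T_{\mu,t}^{\text{FNA}}[f] = f - \frac{t}{\rho}\bigl[(-\Deltam)^{\alpha/2}(f\rho) - f(-\Deltam)^{\alpha/2}\rho\bigr] + o(t),
\]
so dividing by $t$ and passing to the limit in $L^2$ yields the operator $\mathcal{H}$ of equation~\ref{eq:fns_infinitesimal_generator} (up to a conventional sign that is reconciled with the $\partial_t\rho + (-\Deltam)^{\alpha/2}\rho = 0$ form of the fractional heat equation). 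For the uniform-grid case $\rho \equiv \text{const}$, so $(-\Deltam)^{\alpha/2}\rho = 0$ and $(-\Deltam)^{\alpha/2}(f\rho) = \rho(-\Deltam)^{\alpha/2}f$, collapsing $\mathcal{H}[f]$ to the pure fractional Laplacian and recovering equation~\ref{eq:frac_heat_eq}.

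\textbf{Main obstacles.} The central lemma is the load-bearing technical step: $\Phi_\alpha$ only two-sidedly bounds the true Lévy heat kernel (equation~\ref{eq:beta_scale_kernel}) rather than equalling it, so I would need to justify that its action on smooth test functions produces the same first-order $(-\Deltam)^{\alpha/2}$ contribution, at least up to constants that cancel in the normalized ratio. The heavy-tailed regime $\alpha \in [1,2)$ forces a careful interchange of the principal-value $\varepsilon\downarrow 0$ limit in equation~\ref{eq:frac_laplacian_euclidean} with the diffusive $t \to 0$ limit; it is here that the $\mathcal{C}^3$ regularity of $\rho$ (and implicitly $f$) is consumed, and where the match between the near-field Taylor remainder and the far-field singular-integral kernel must be pinned down. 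On $\sphere^{d-1}$, curvature corrections separating chord length from geodesic distance, together with the spectral (rather than singular-integral) realization of $(-\Deltam)^{\alpha/2}$, add a further layer of bookkeeping required to close the argument in $L^2$.
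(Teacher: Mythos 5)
Your proposal is correct in substance and shares the paper's overall skeleton: reduce to the isometric case via Proposition~\ref{prop:distance_preserving}, expand the kernel's action to first order in the time parameter, apply this with $g=f\rho$ in the numerator and $g=\rho$ in the denominator of the row-normalized push-forward, expand the reciprocal of the denominator, and pass to the limit; the uniform-grid reduction is identical. Where you genuinely diverge is in how the first-order expansion is justified. The paper (following the diffusion-maps template of Coifman--Lafon/Nadler and its fractional extension) simply treats $k_\varepsilon$ as the fractional heat kernel and Taylor-expands the solution of $u_t=-c(-\Delta)^{\alpha/2}u$ in time, so the $(-\Delta)^{\alpha/2}$ term appears ``for free'' from the semigroup property; it never confronts the fact that the implemented kernel is built from the profile $\Phi_\alpha$, which for $\alpha<2$ is only comparable to (not equal to) the heat kernel. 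Your central lemma --- rescaling $\mathbf{z}=(\mathbf{y}-\mathbf{x})/t^{1/\alpha}$, killing the first-order near-field term by radial symmetry, noting the second-order term is $O(t^{2/\alpha})=o(t)$ for $\alpha<2$, and matching the far-field power-law tail to the singular-integral definition in equation~\ref{eq:frac_laplacian_euclidean} --- is a more self-contained and arguably more honest route that directly handles the profile kernel; its cost is the extra bookkeeping you flag (tail constants, interchange of limits, sphere curvature), which the paper avoids by assumption. One small correction: the profile-dependent constant does not ``cancel in the ratio'' --- it multiplies the whole first-order bracket in both numerator and denominator and so survives as an overall constant $c$ in front of the generator, exactly as in the paper's Lemma~\ref{lemma:inf_gen}; it is harmless only because it amounts to a time rescaling (and the theorem statement silently drops it, along with the sign that you also correctly identify as conventional).
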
 

We outline the proof of Theorem \ref{thm:fna_pde} in ``Methods'', drawing from the diffusion-maps framework \cite{Coifman2006,Nadler2008}.
Under Assumptions 1 \& 2, our FNA kernel (equation~\ref{eq:fna_kernel}) satisfies the conditions of a rotation-invariant heat kernel (see ``Methods''), allowing us to establish our proof.  
In the formulation, the infinitesimal generator of FNA (equation~\ref{eq:fns_infinitesimal_generator}) depends on the underlying distribution of token embeddings $\rho$, which is generally non-uniform on the underlying manifold, i.e., $\mathrm{d}\rho(\mathbf{x}) \neq \operatorname{Vol}(\mathbb{S}^{d-1})^{-1} \mathrm{d} \mathbf{x}$ for $\manifold = \mathbb{S}^{d-1}$ where the volume of the manifold constitutes the normalization factor $\operatorname{Vol}(\mathbb{S}^{d-1}) \coloneqq \int_{\mathbb{S}^{d - 1}} \d \vol_{\mathbf{x}}$.
We highlight this non-uniformity by visualizing empirical embedding distributions from several large pretrained models in Appendix~\ref{supp_sec:pretrained_model_embeddings}.

We next empirically verify the convergence of FNA to the fractional Laplacian by examining its eigenspectrum. 
The eigenspectrum is important because it uniquely distinguishes the standard and fractional Laplacian operators.
For instance, Weyl's law \cite{Weyl1912} states that the eigenvalues of the Laplace-Beltrami operator follow $\lambda_j \propto j^2$ for large $j$. 
In contrast, the eigenvalues of the fractional Laplacian operator theoretically follow $\lambda_j \propto j^\alpha$ for $\alpha < 2$ (see equation~\ref{eq:frac_laplacian} in ``Methods''). 
For comparison, we calculate the empirical eigenvalues for FNA applied to 500 uniformly spaced points on $\sphere^1$ (see``Methods''). 
We find that the scaling of empirical eigenvalues agree with theoretical predictions for the operator limits in both $\alpha < 2$ (blue curves) and $\alpha = 2$ (red curves) regimes (Fig.~\ref{fig:fna_token_and_spectrum}\hyperref[fig:fna_token_and_spectrum]{a}).
We note that the discrepancy between theory and experiment increases with the eigenvalue magnitude. 
However, this can be mitigated by increasing the number of data points and reducing the bandwidth parameter $\varepsilon$ where we set $\kappa = \sqrt{\varepsilon}$ in equation~\ref{eq:frac_attn_score} (see``Methods'').

Furthermore, we confirm that FNA promotes multiscale interactions in the $\alpha < 2$ regime. 
To illustrate this, we impose Assumptions 1 and 2 on FNA (equation~\ref{eq:frac_attn_score}) with $\alpha = 1.2$ and $2$, yielding $\mathbf{Q} = \mathbf{K} = \mathbf{X}$.
The token embeddings (gray circles) $\mathbf{X} \in \mathbb{R}^{d \times n}$, where $(n, d) = (18, 2)$, are sampled from a two-dimensional multimodal distribution 
$$
\rho(\mathbf{x}) = \frac{1}{3} \sum_{i=1}^3 f_2(\mathbf{x}; \boldsymbol{\mu}_i, \boldsymbol{\Sigma})
$$ 
consisting of two-dimensional Gaussian distributions $f_2(\mathbf{x}; \boldsymbol{\mu}, \boldsymbol{\Sigma})$ with three well-separated modes centered at $\boldsymbol{\mu}_1 = (3, 0)$, $\boldsymbol{\mu}_2 = (-3, 0)$ and $\boldsymbol{\mu}_3 = (0, \sqrt{3})$ respectively, and the covariance matrix $\boldsymbol{\Sigma} = \mathbf{I} / 4$.
We observe strong multiscale connectivity, including inter- and intra-cluster interactions, for $\alpha < 2$ (Fig.~\ref{fig:fna_token_and_spectrum}\hyperref[fig:fna_token_and_spectrum]{b}).
In contrast, $\alpha=2$ exhibits intra-cluster connectivity only (Fig.~\ref{fig:fna_token_and_spectrum}\hyperref[fig:fna_token_and_spectrum]{c}). 
Because of these distinct characteristics, we henceforth refer to the cases of $\alpha < 2$ and $\alpha = 2$ as multiscale FNA and local attention, respectively.

\section*{Mechanisms of FNA}

In this section, we provide a mechanistic account of the computational advantages of FNA by comparing multiscale FNA ($\alpha < 2$), local attention ($\alpha = 2$) and vanilla dot-product attention (DP). 
We adopt a minimal modeling approach, widely used in theoretical deep-learning studies \cite{Lu2019,Vuckovic2020,Bahri2020,Sander2022,Geshkovski2023b}, and focus on single-head attention so that Assumptions 1 \& 2 hold and the fractional-diffusion limit is exact. 
We then train all models on the IMDb reviews dataset \cite{Maas2011} for binary text classification. We first show multiscale FNA achieves optimal performance with fewer parameters, indicating a more expressive and powerful mechanism. Adopting a graph-theoretic view of attention interactions, we then perform an ablation study via random edge removal to confirm the superior expressivity of multiscale attention. Finally, using graph spectral analysis and diffusion maps, we elucidate how multiscale attention achieves enhanced expressivity with fewer parameters.

\subsection*{Fewer parameters.}

Multiscale attention requires substantially fewer parameters to achieve superior performance. 
To demonstrate this, we perform a hyperparameter sweep over embedding dimensions $d$ and depths $L$ for all models under Assumption 1 (orthogonality). 
Because results are robust across all $\alpha < 2$ (see Fig.~\ref{fig:fna_mechanism}\hyperref[fig:fna_mechanism]{a}), we present multiscale attention with $\alpha = 1.2$ as a representative case. 
In contrast to local attention and DP, multiscale FNA exhibits near-constant performance at all dimensions $d$ and depth $L = 1$ (Fig.~\ref{fig:imdb_ablation}). 
This is true even if weight-tying is imposed, significantly reducing the number of parameters (Fig.~\ref{fig:imdb_ablation}\hyperref[fig:imdb_ablation]{b}). 
Notably, while local attention and DP require multiple layers for their performances to saturate, multiscale FNA achieves the best performance among the three models with just a single layer. 
In fact, a single-layer multiscale model with one head and $d = 8$, despite its small parameter budget, outperforms a full-size Transformer model with 6 DP layers with 8 attention heads and $d = 512$; we show all full-size results in the following section (Fig.~\ref{fig:train_imdb}). 
These results indicate that multiscale FNA delivers greater expressivity with far fewer parameters. 
Practically, this can alleviate the computational bottleneck associated with the complexity of self-attention (i.e., $O(n^2 d + d^2n)$ for each attention layer) by enabling lower dimensionality $d$ and fewer layers $L$, complementing conventional methods that target scaling in $n$ \cite{Dao2022, Zaheer2020, beltagy2020longformer}.

\begin{figure}
    \centering
    \includegraphics[scale=0.65]{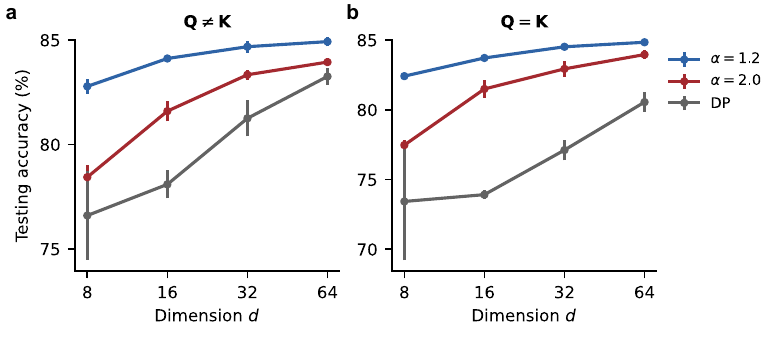}
    \caption{\textbf{Effects of embedding dimension and depth.} \textbf{a} Accuracy on the testing dataset across embedding dimensions $d$ and depth $L = 1$ with $\mathbf{Q} \neq \mathbf{K}$. Dots and error bars show the mean and standard deviation across five trials. 
    \textbf{b} Same as \textbf{a}, using $\mathbf{Q} = \mathbf{K}$.}
    \label{fig:imdb_ablation}
\end{figure}

\subsection*{Greater expressivity.} 

To demonstrate that FNA’s superior performance, especially in the multiscale regime, arises from a more expressive attention mechanism, we perform an ablation study that progressively suppresses expressivity by randomly masking attention between token pairs with Bernoulli probability $p$. 
Viewing the attention mechanism as a graph whose nodes represent tokens and directed edges represent attention, this corresponds to removing a random fraction $p$ of edges.
As expected, edge-removal on the pretrained networks with weight-tying leads to decaying accuracies on the testing dataset (Fig.~\ref{fig:dynamic_inference_qqv}). 
However, across all dimensions $d$, DP models degrade far less than FNA models: their accuracies remain well above chance level even at $p=1$, where self-attention is completely absent and only fully-connected layers determine the output. 
This striking result implies that DP models derive most of their expressivity from the fully-connected layers rather than the attention layers. In contrast, the attention mechanism of FNA models is intrinsic to its expressivity: without it, performance collapses to chance-level.
We observe qualitatively similar trends in models without weight-tying, although less pronounced (see Fig.~\ref{fig:dynamic_inference_qkv} in Appendix~\ref{supp_sec:additional_text_classification}). 

\begin{figure*}[t]
    \centering
    \includegraphics{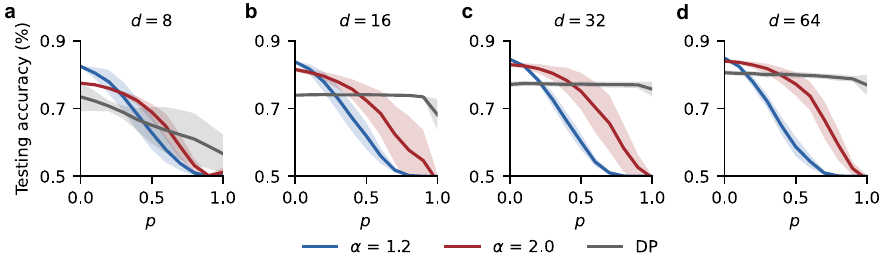}
    \caption{\textbf{Ablation of nodes in the attention graph.} Accuracy on the testing dataset after randomly ablating nodes from the attention graph with probability $p$. Lines show the mean across five trials for each network with $\textbf{Q} = \textbf{K}$. Shaded regions indicate the standard deviation. 
    \textbf{a} $d=8$. \textbf{b} $d=16$. \textbf{c} $d=32$. \textbf{d} $d=64$.}
    \label{fig:dynamic_inference_qqv}
\end{figure*}

\subsection*{Mechanistic explanation.}

We analyze FNA through the lens of an attention graph: self-attention induces a random-walk process on a directed graph whose nodes are tokens and edges encode attentional interactions. 
We take the edge weights to be equal to the reciprocal of attention weights, consistent with the intuition that strongly attending tokens are ``closer'' together. 
The dynamics of the random walk are governed by spectral properties of the attention matrix $\mathbf{A}$. 
In particular, a larger spectral gap $1-\lambda_2$, where $\lambda_2$ is the second eigenvalue of the attention matrix $\mathbf{A}$, suggests a smaller mixing time---information can flow between nodes (tokens) more rapidly. 
To test this prediction, we calculate the mean spectral gap across a $10^2$ training subset (Fig.~\ref{fig:fna_mechanism}\hyperref[fig:fna_mechanism]{a}). 
We observe a strong correlation between the spectral gap and testing accuracy for both multiscale FNA and local attention, whereas this relationship is absent for DP; differences among various $\alpha < 2$ are negligible, due to the robustness of multiscale attention explained above. 
Prior work has also applied the link between spectral gap and mixing time, although through a heuristic argument, to motivate their sparse attention formulation \cite{Zaheer2020}.
In contrast, our results reveal that FNA, unlike vanilla attention, admits a predictive index for quantifying performance---the spectral gap---due to its interpretable diffusion formulation. 

To further understand the connectivity of tokens within the graph representation, we perform a dimensionality reduction of the token embeddings $\mathbf{x}$ using a diffusion map $\boldsymbol{\Psi}_{\tau} : \mathbb{R}^n \rightarrow \mathbb{R}^m$:
\begin{equation} \label{eq:diffusion_map}
\boldsymbol{\Psi}_{\tau}(\mathbf{x}) = \left(\lambda_1^{\tau} \boldsymbol{\psi}_1(\mathbf{x}), \lambda_2^{\tau} \boldsymbol{\psi}_2(\mathbf{x}), \ldots, \lambda_m^{\tau} \boldsymbol{\psi}_m(\mathbf{x})\right),
\end{equation}
where $\tau$ denotes the diffusion time parameter for analyzing pretrained FNA models.
Assuming a connected attention graph based on equation~\ref{eq:frac_attn_score}, the $i$-th eigenvalue-eigenvector pair $(\lambda_i, \boldsymbol{\psi}_i)$ is obtained from the eigendecomposition of the attention weights ordered based on $1 = \lambda_0 \geq \lambda_1 \geq \lambda_2 \geq \cdots \geq \lambda_{n-1} > 0$:
\begin{equation} \label{eq:attn_weight_decomposition}
A_{ij}^{\tau}(\mathbf{x}) = \sum_{i \geq 0} \lambda_i^{\tau} \boldsymbol{\psi}_i(\mathbf{x}) \boldsymbol{\phi}_i(\mathbf{x}).
\end{equation}
The degree of dimensionality reduction based on $m$ is determined by the user based on their own criteria \cite{Coifman2006}.
The Euclidean distance between the resulting lower-dimensional vectors $\Vert \boldsymbol{\Psi}_{\tau}(\mathbf{x}_i) - \boldsymbol{\Psi}_{\tau}(\mathbf{x}_j) \Vert$ approximates the diffusion distance $D_t(\mathbf{x}_i, \mathbf{x}_j)$ in equation~\ref{eq:diffusion_dist}, ``Methods''.
This measures the similarity between the data points based on their connectivity within a graph structure.
It quantifies how easily one can transition from one data point to another assuming a ``random walk'' process on the graph.
Here, the process is differentiated by $\alpha$ with L\'evy process ($\alpha < 2$) and Brownian motion ($\alpha = 2$).
The time parameter $t$ determines the scale of the structure being considered.
Points in proximity with respect to the diffusion distance are more likely to be connected by paths of a certain length, reflecting better information flow from one token to another; the larger the distance, the more restricted is this flow. 
Applying the diffusion map algorithm with $m = 2$ to the token embeddings of a randomly selected sequence with $n = 500$, we find that the lower-dimensional representations $\boldsymbol{\Psi}_{\tau}(\mathbf{x})$ are in close proximity for $\alpha = 1.2$ but apart for $\alpha = 2$ (Fig.~\ref{fig:fna_mechanism}\hyperref[fig:fna_mechanism]{b}). 
Thus, the non-local kernel facilitates long-range interactions between faraway embedding vectors that are otherwise disconnected from the attention graph under the local kernel. 
We emphasize that the formulation of FNA based on diffusion interactions on a manifold is what enables this interpretable geometric description of the attention weights (see ``Methods'' for further details). 

\begin{figure*}[t]
    \centering
    \includegraphics[width=\textwidth]{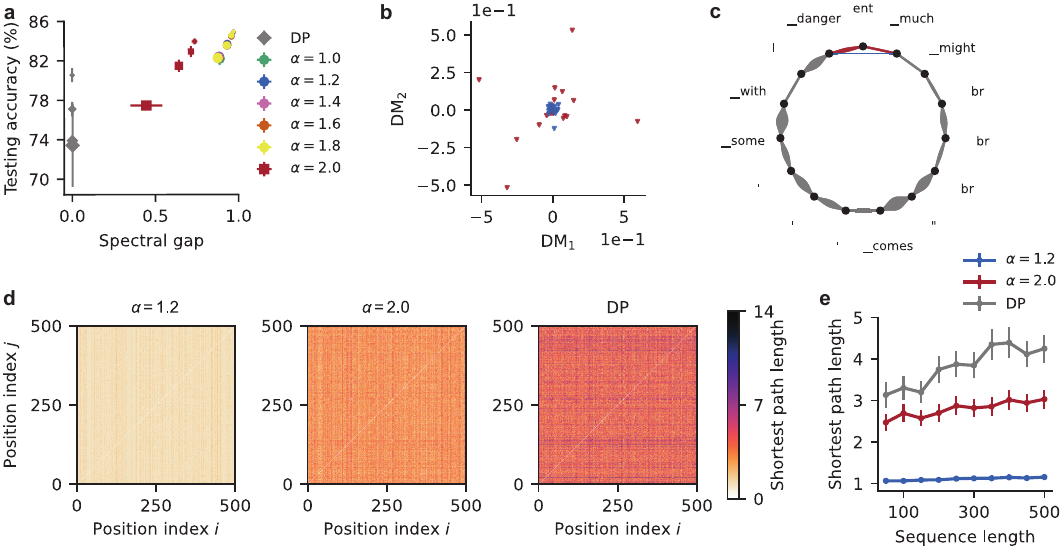}
    \caption{\textbf{Graph-theoretic analysis of fractional neural attention.} 
    \textbf{a} Mean test accuracy across five trials versus the mean spectral gap computed on a random IMDb subset of size 100. 
    Error bars show the standard deviation. 
    \textbf{b} Diffusion map visualization with $m = 2$. 
    \textbf{c} Attention between two related tokens: ``much'' and ``danger''. The shortest paths for DP, $\alpha=2$ and $\alpha=1.2$ are 13, 2 and 1, respectively. 
    Edge widths are proportional to the reciprocal of the attention weight.
    \textbf{d} Shortest-path lengths between every pair of tokens in the same sequence for multiscale FNA ($\alpha = 1.2$), local attention ($\alpha = 2$) and DP.
    \textbf{e} Dots show the mean shortest path length between each pair of token in a random sequence of the specific length. Error bars represent 0.2 standard deviations (scaled for clarity). 
    }
    \label{fig:fna_mechanism}
\end{figure*}

We further show that this enhanced connectivity of multiscale FNA manifests as shorter path lengths on the attention graph. 
Intuitively speaking, the shortest path length between token $\mathbf{x}_i$ and token token $\mathbf{x}_j$ approximates the number of attention applications required for $\mathbf{x}_i$ to effectively attend to $\mathbf{x}_j$. 
Thus, the shortest path length is a meaningful indication of the number of layers needed to realize the full expressivity of an attention mechanism. 
As an illustrative example, derived from the same 500-length sequence in Fig.~\ref{fig:fna_mechanism}\hyperref[fig:fna_mechanism]{b}, we observe that the words ``much'' and ``danger''---which are adjacent and thus semantically related---are connected by a direct path under $\alpha = 1.2$, but requires shortest paths of length $2$ and $13$ for $\alpha = 2$ and DP, respectively (Fig.~\ref{fig:fna_mechanism}\hyperref[fig:fna_mechanism]{c}). 
The matrix of shortest path lengths between any two tokens reveals that this is general across the entire sequence (Fig.~\ref{fig:fna_mechanism}\hyperref[fig:fna_mechanism]{d}): the maximum shortest path lengths are 2, 6 and 14 for $\alpha = 1.2$, $\alpha = 2$ and DP, respectively.
By repeating this analysis, we determine that these differences in shortest path lengths is robust across all sequence lengths (Fig.~\ref{fig:fna_mechanism}\hyperref[fig:fna_mechanism]{d}). 
On average, multiscale attention ($\alpha < 2$) requires only one iteration for any token in the sequence to maximally attend to any other, essentially independent of sequence length. 
In contrast, local attention and DP require additional iterations, growing substantially with sequence length. 
These results indicate that FNA is particularly effective for long sequences and explain why multiscale attention can achieve optimal performance with only a single layer and small embedding dimensions, while local attention and DP typically need large embedding dimensions or multiple layers for performance to saturate (Fig.~\ref{fig:imdb_ablation}).

\section*{Experiments}

We evaluate FNA in comparison to standard Transformers across various tasks and modalities, including sentiment analysis and image classification through the architecture of Vision Transformers (ViTs) \cite{Dosovitskiy2020}.
We also consider more complex tasks which involve a decoding component such as neural machine translation.
Architectures and training details are provided in ``Methods''.

\begin{figure}[t]
    \centering
    \includegraphics[scale=0.68]{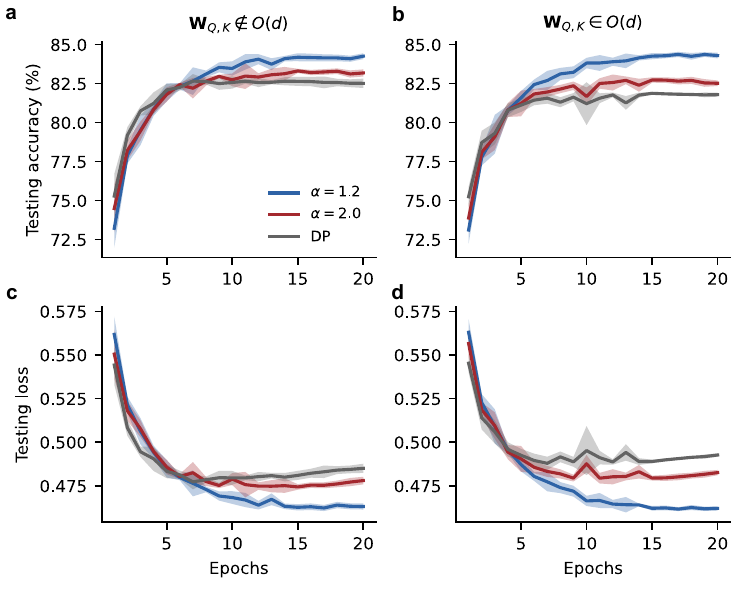}
    \caption{\textbf{Learning curves for text classification.} \textbf{a} Mean test accuracy across five trials for multiscale FNA ($\alpha = 1.2$), local attention ($\alpha = 2$) and Transformer (DP) using $\mathbf{W}_{Q,K} \notin O(d)$. 
    The shaded area indicates one standard deviation. 
    \textbf{b} Same as \textbf{a}, but with $\mathbf{W}_{Q,K} \in O(d)$. 
    \textbf{c} Same as \textbf{a}, but showing test loss. 
    \textbf{d} Same as \textbf{c}, but with $\mathbf{W}_{Q,K} \in O(d)$.}
    \label{fig:train_imdb}
\end{figure}

\subsection*{Text classification.}  \label{sec:text_classification}

We test the model performance on the IMDb dataset using a more realistic model configuration of 6 layers and 8 attention heads.
For $\mathbf{W}_{Q,K} \notin O(d)$, FNA model with $\alpha = 1.2$ achieves the highest median evaluation accuracy by the end of training for 20 epochs across 5 trials (Fig.~\ref{fig:train_imdb}\hyperref[fig:train_imdb]{a} blue curve).
Although Transformer attains an accuracy of $\sim 81\%$ earlier in the learning process, its performance quickly plateaus after epoch 7, whereas the local attention (red curve) and multiscale FNA (blue curve) models continue to improve (Fig.~\ref{fig:train_imdb}\hyperref[fig:train_imdb]{a}).
When $\mathbf{W}_{Q,K} \in O(d)$, we refer to this as orthogonal projection (op). 
Under this setting, the overall behavior remains similar; however, the test accuracy gap between op-FNA $(\alpha = 1.2)$ and op-DP widens further (Fig.~\ref{fig:train_imdb}\hyperref[fig:train_imdb]{b}).
The loss curves for all models reflect their corresponding accuracies (Fig.~\ref{fig:train_imdb}\hyperref[fig:train_imdb]{c},\hyperref[fig:train_imdb]{d}), showing an overall decreasing trend.
However, FNA (blue curves) exhibits the smallest degradation for both cases of $\mathbf{W}_{Q,K} \notin O(d)$ and $\mathbf{W}_{Q,K} \in O(d)$.
The overall accuracies of the 6-layer models can already be matched by the single-layer models in Fig.~\ref{fig:imdb_ablation}, with multiscale FNA exhibiting superior trainability.
We summarize the model performances in Table~\ref{table:train_imdb}.

\begin{table}[h]
\centering
\begin{tabular}{l|c|c|c|c}
\specialrule{.1em}{.05em}{.05em}  
Model & Mean & Median & Best & Worst \\
\hline
FNA & \underline{84.14\%} & \underline{84.26\%} & \underline{84.33\%}  & \underline{83.88\%} \\
LA & 83.10\% & 83.18\% & 83.42\% & 82.80\% \\
DP & 82.57\% & 82.52\% & 82.00\% & 82.25\% \\
\hline 
op-FNA & $\textbf{84.25\%}$ & \textbf{84.30\%} & \textbf{84.44\%} & \textbf{83.97\%} \\
op-LA & 82.56\% & 82.52\% & 82.81\% & 82.29\% \\
op-DP & 81.79\% & 81.79\% & 81.99\% & 81.63\% \\
\specialrule{.1em}{.05em}{.05em} 
\end{tabular}
\vspace{0.2cm}
\caption{\label{table:train_imdb} 
Test accuracy on IMDb dataset acquired over 5 independent iterations.
Best performance of each category is bold while the second best is underlined.
}
\end{table}

\subsection*{Image processing.} \label{sec:image_processing}

Following standard practice, we treat an image as a sequence by feeding ordered, fixed-size patches into the attention mechanism, yielding a ViT architecture \cite{Dosovitskiy2020}. We train FNA-based models and compare them with a Transformer baseline on CIFAR-10 \cite{cifar10}, and also examine the effect of enforcing Assumption 1 (orthogonality).
 
At the end of training, multiscale FNA ($\alpha = 1.2$) achieves a mean accuracy of 76.17\%, surpassing its local counterpart (75.14\%) and slightly outperforming the Transformer baseline (76.03\%) (Fig.~\ref{fig:train_cifar10}\hyperref[fig:train_cifar10]{a}).
When orthogonality is enforced ($\mathbf{W}_{Q,K} \in O(d)$), multiscale FNA exhibits a minor performance drop yet still exceeds the Transformer, whose mean accuracy remains nearly unchanged at 76.07\%. 
In contrast, both LA and DP show more noticeable degradation, reaching 75.73\% and 75.25\%, respectively (Fig.~\ref{fig:train_cifar10}\hyperref[fig:train_cifar10]{b}).
Opposite trends are observed for the test loss in Fig.~\ref{fig:train_cifar10}\hyperref[fig:train_cifar10]{c}--\hyperref[fig:train_cifar10]{d}.

\begin{figure}[t]
\centering
\includegraphics[scale=0.67]{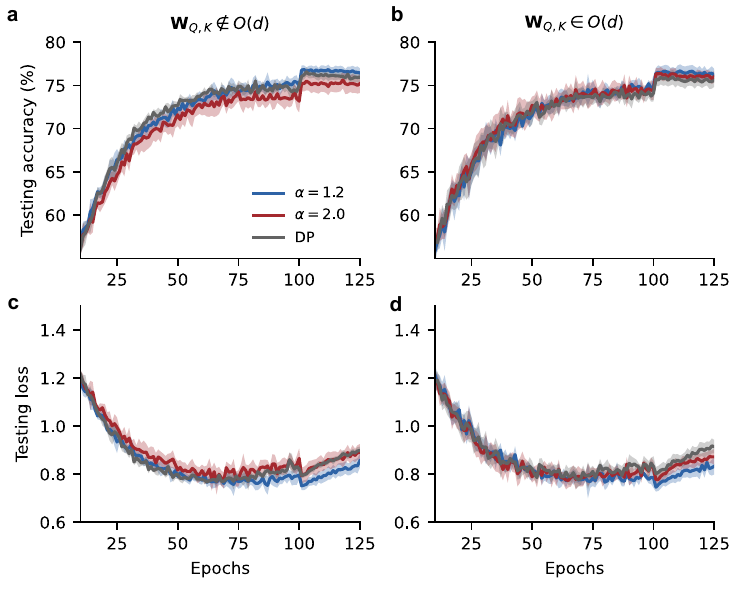}
\caption{\label{fig:train_cifar10} \textbf{CIFAR10 classification.}
\textbf{a} Mean test accuracy percentage for 4-layer and 6-attention-head FNA, local attention and Transformer (DP) over 5 trials for $\mathbf{W}_{Q,K} \notin O(d)$.
The lower and upper parts of the color shade represent the standard deviation over the 5 trials at each epoch.
The blue and red curves represent multiscale FNA ($\alpha = 1.2$) and local attention ($\alpha = 2$), respectively.
\textbf{b} Same as \textbf{a}, but with $\mathbf{W}_{Q,K} \in O(d)$.
\textbf{c} Same as \textbf{a}, but showing test loss.
\textbf{d} Same as \textbf{c}, but with $\mathbf{W}_{Q,K} \in O(d)$.
}
\end{figure}

\subsection*{Machine translation.}

Having established FNA’s advantages on text and image classification, we next evaluate it on a translation task that introduces a decoder in addition to the encoder, making the problem substantially more challenging. 
We train all models on the Multi30K English–German (En-De) dataset \cite{Elliott2016}, using 30 training epochs and 5 independent runs per configuration. 
Multiscale FNA achieves higher BLEU (bilingual evaluation understudy) scores \cite{Papineni2002} than both local attention and DP; summary results are reported in Table 2, with training curves in Fig.~\ref{fig:train_translation} (see Appendix~\ref{supp_sec:additional_machine_translate}). 

\begin{table}[h]
\centering
\begin{tabular}{l|c|c}
\specialrule{.1em}{.05em}{.05em}  
Model & $\mathbf{W}_{Q, K} \notin O(d)$ & $\mathbf{W}_{Q, K} \in O(d)$ \\
\hline
FNA & \bf{34.64} $\pm$ \bf{0.37} & \bf{33.54} $\pm$ \bf{0.38} \\
LA  & $34.13 \pm 0.46$ &  $32.85 \pm 0.36$ \\
DP  & $34.00 \pm 0.68$ & $33.19 \pm 0.68$ \\
\specialrule{.1em}{.05em}{.05em} 
\end{tabular}
\vspace{0.2cm}
\caption{\label{table:train_translate} 
The mean BLEU score and its standard deviation across 5 iterations for each model. 
}
\end{table}

\section*{Discussion}

In this study, we have introduced fractional neural attention (FNA), a novel neuroscience-inspired attention framework that unifies principles of attentional sampling with dynamical systems, network theory, and Transformer architectures. 
FNA naturally captures token interactions across multiple scales, yielding greater expressivity and faster information mixing, and thus advances the foundational capacity of Transformers. 
Theoretically, we have illustrated that the dynamics of FNA are governed by a partial differential equation mechanistically linked to the fractional Laplacian, and that larger spectral gaps and shorter path lengths in the induced attention networks underlie its computational advantages. 
Empirically, FNA matches or exceeds Transformer baselines across text, vision, and machine translation tasks, indicating substantial headroom for scaling to larger models.

A growing line of work has approached attention mechanisms through the lenses of stochastic and dynamical systems, treating token embeddings as spatial variables and depth as a time component \cite{Lu2019,Vuckovic2020,Vuckovic2021,Geshkovski2023b}. 
For instance, the transformation of token embeddings in a sequence as they pass through Transformer layers has been interpreted as the evolution of a multi-particle system using numerical methods like the Lie-Trotter splitting scheme \cite{Lu2019}.
Bridging deep neural networks with ODEs \cite{Chen2018} has also been adopted in other studies to understand and describe the geometry of learned representations of Transformers with time-independent weights \cite{Geshkovski2023b}.
In a similar vein, this framework enables the application of measure-theoretic tools to investigate the regularity properties of Transformers \cite{Vuckovic2020,Vuckovic2021}.
Additionally, under suitable normalization---e.g., through the Sinkhorn algorithm to enforce bistochasticity on attention scores and additional conditions on the projection matrices---the resulting push-forward map converges to a local Laplace operator, yielding heat-equation dynamics (Brownian motion) for self-attention \cite{Sander2022}. 
In contrast, Fractional Neural Attention (FNA) yields an operator-level non-local limit governed by the fractional Laplacian, linking attention directly to $\alpha$-stable L\'evy dynamics with multiscale, power-law interactions rather than Brownian diffusion \cite{Applebaum_2009}. 

FNA enables multiscale coupling of short- and long-range interactions simultaneously, because its update rule is driven by a fractional operator. 
This differs fundamentally from engineered local-global hybrids that hard-code two scales. 
For instance, Longformer combines sliding-window attention with a small set of global tokens to capture local and document-level dependencies \cite{beltagy2020longformer}; BigBird introduces block-sparse patterns with global, random, and local links to ensure theoretical completeness and scalability \cite{Zaheer2020}; and Nystr\"omformer approximates full attention with landmark-based low-rank kernels \cite{Xiong2021}. 
These designs reliably extend reach and improve efficiency, but their local-global behavior arises from hand-crafted patterns (such as window sizes, dilation rates, landmark choices) rather than from a single governing operator. 
In FNA, fractional diffusion is built into the attention rule, yielding a principled mechanism for multiscale coupling. 
A single parameter $\alpha$ determines the heaviness of the interaction tail.
For $\alpha < 2$, the operator induces scale-free, power-law coupling across all distances, causing multiscale behavior to emerge intrinsically rather than through manually designed connectivity.
When $\alpha = 2$, the operator reduces to the local attention case, which coincides exactly with a previous attention formulation that ensures Lipschitz continuity \cite{Kim2021}.
This also suggests that the multiscale FNA map retains regularity properties, which can stabilize training \cite{Miyato2018} and facilitate the construction of flow-based generative models \cite{Behrmann2019,Chen2019}.

Related ``non-local'' neural networks introduce global dependencies by summing over all positions with learned weights \cite{Wang_2018_CVPR}.
In contrast, FNA’s non-locality arises from the fractional Laplacian: it modifies the law of interaction (the kernel that governs how influence decays with distance), not merely the connectivity pattern, yielding a unified and interpretable geometric–stochastic foundation.
Analyzing FNA through an attention-network perspective reveals that self-attention defines a discrete random walk analogous to a L\'evy process whose mixing speed is governed by the spectral gap of the attention matrix. 
Empirically, FNA exhibits a strong correlation between model performance and spectral gap, indicating faster information mixing while maintaining robustness across different fractional orders $\alpha < 2$.
This connection between mixing speed and model efficiency parallels the motivation behind BigBird \cite{Zaheer2020}, where sparse attention patterns approximate Erd\H{o}s-R{\'e}nyi graphs with large spectral gaps and rapid mixing.
By contrast, FNA achieves efficiency without imposing sparsity-inducing inductive biases: it delivers substantial parameter reductions (fewer layers and lower embedding dimensions) while concentrating a greater portion of information processing within the attention mechanism itself, thereby aligning the model architecture more closely with the data geometry than the original Transformer.

It is important to note that natural data, from images \cite{Munn2018,Ruderman1994,Field87} to language sequences \cite{Alabdulmohsin2024}, often exhibit fractal structure, nested multiscales and heavy-tailed statistics. 
FNA aligns with these statistics by introducing self-similar, power-law interactions characteristic of L\'evy processes, enabling efficient information mixing. 
More broadly, nested multiscale organization is central to diverse cognitive functions \cite{Gilden1995,Kello2010,He2014}; FNA thus offers a unifying principle linking biological and artificial intelligence, and could catalyze deeper, synergistic interactions between neuroscience and modern AI \cite{Hassabis2017,Zador2023}.

\section*{Methods}
\subsection*{Riemannian manifold}
Throughout this paper, we only consider Riemannian manifolds $\manifold$.
Provided the local coordinates, its volume form is $\d \vol_{\mathbf{x}} \coloneqq \sqrt{|g|} \d x^1 \wedge \cdots \wedge \d x^n$ where $\vert g \vert$ is the absolute value of the determinant of the matrix representation of the metric tensor on $\manifold$.
Additionally, we use $d_g: \manifold \times \manifold \rightarrow \R^+$ to represent the geodesic metric. That is, for $\mathbf{x}, \mathbf{y} \in \manifold$,
\begin{equation} \label{eq:g_dist}
d_g(\mathbf{x}, \mathbf{y})=\inf_{\gamma} \int_0^1 \sqrt{g_{\gamma(t)}(\nabla \gamma(t), \nabla \gamma(t))} \d t.
\end{equation}
is the infimum over piecewise differentiable paths $\gamma: [0,1] \rightarrow \manifold$ between $\gamma(0) = \mathbf{x}$ and $\gamma(1) = \mathbf{y}$. 

\subsection*{Fractional Laplacian on manifolds}
Given a closed and compact manifold $\manifold$ as well as a square integrable function $u \in L^2(\manifold)$, the fractional Laplacian is defined spectrally through
\begin{equation} \label{eq:frac_laplacian}
(-\Deltam)^{\alpha/2} u(\mathbf{x}) \coloneqq \sum_{i=1}^{\infty} \lambda_i^{\alpha/2} \left\langle u, \boldsymbol{\phi}_i\right\rangle_{L^2(\manifold)} \boldsymbol{\phi}_i(\mathbf{x})
\end{equation}
for $\mathbf{x}$ in the fractional Laplacian domain.
Given the compactness of the manifold, the set $\{\boldsymbol{\phi}_i \}_{i=1}^{\infty}$ composes the orthonormal eigenfunctions through $-\Deltam \boldsymbol{\phi}_i = \lambda_i \boldsymbol{\phi}_i$ with countably many eigenvalues that satisfy $0 = \lambda_1 < \lambda_2 \leq \lambda_3 \leq \cdots$ with $\lim_{i \rightarrow \infty} \lambda_i = \infty$.

\subsection*{Heat kernel}
A function $k: [0, \infty) \times \manifold \times \manifold \rightarrow [0,\infty)$ is a heat kernel if for almost every $\mathbf{x}, \mathbf{y} \in \manifold$ and $\forall s, t \geq 0$:
\begin{enumerate}
    \item Positivity: $k_t(\mathbf{x}, \mathbf{y}) \geq 0$.
    \item Total mass inequality: $\int_{\mathbf{y} \in \manifold} k_t(\mathbf{x},\mathbf{y}) \d\vol_y \leq 1$.
    \item Symmetry: $k_t(\mathbf{x},\mathbf{y}) = k_t(\mathbf{y},\mathbf{x})$.
    \item Semi-group: $k_{s+t}(\mathbf{x},\mathbf{y}) = \int_{z \in \manifold} k_s(\mathbf{x},\mathbf{z})k_t(\mathbf{z},\mathbf{y}) \d\vol_{\mathbf{z}}$.
    \item Approximation of identity:
    \begin{equation}
    \lim _{t \rightarrow 0^{+}}\left\|\int_{y \in \mathcal{M}} k_t(\mathbf{x}, \mathbf{y}) f(\mathbf{y}) \d\vol_{\mathbf{y}} -f(\mathbf{x})\right\|_{L^2(\mathcal{M},g)}=0, 
    \end{equation}
    for $f \in L^2(\manifold, g)$.
\end{enumerate}

Every heat kernel gives rise to an associated semigroup
\begin{equation} \label{eq:semigroup_def}
\mathcal{K}_t f(\mathbf{x}) = \int_{\mathbf{y} \in \manifold} k_t(\mathbf{x},\mathbf{y}) f(\mathbf{y}) \d\vol_{\mathbf{y}}.    
\end{equation}
A semigroup also gives rise to a quadratic form
\begin{equation} \label{eq:quadratic_form}
\xi(f) = \lim _{t \rightarrow 0^{+}}\left\langle\frac{f-\mathcal{K}_t f}{t}, f\right\rangle_{L^2(\manifold, g)}.  \end{equation}
We define $d_g: \manifold \times \manifold \rightarrow \R^+$ as the geodesic metric given $g$, which is the determinant of the matrix representation of the metric tensor on $\manifold$.
In general, heat kernels can take various forms, but must satisfy the conditions outlined above.
Our focus will be on stochastically complete heat kernels, which means they are normalized, or equivalently, the corresponding semigroups satisfy $\mathcal{K}_t 1 = 1$ $\forall t > 0$ (equation~(\ref{eq:semigroup_def})). 

A quadratic form is regular if there exists a set $\mathcal{C}$ of continuous functions with compact support $\left(C_c(\mathcal{M})\right)$ that are also in the domain $\mathcal{D}(\xi)$ of $\xi$, i.e., $\mathcal{C} \subset C_c(\mathcal{M}) \cap \mathcal{D}(\xi)$ such that $\mathcal{C}$ is dense both in $C_c(\mathcal{M})$ and $\mathcal{D}(\xi)$ under appropriate norms \cite{Antil2021}.
The heat kernel dichotomy \cite{Grigor2008} holds under two conditions:
\begin{enumerate}
    \item All balls defined with metric $d_g$ in $\manifold$ are relatively compact;
    \item The heat kernel $k_t(\mathbf{x},\mathbf{y})$ is a stochastically complete and $\alpha$-scale invariant heat kernel such that the associated quadratic form is regular.
\end{enumerate} 

\subsection*{L\'evy process}
A L\'evy $\alpha$-stable distribution (often termed stable distribution) \cite{Nolan2020}, is defined by a characteristic function involving a tuple $(\alpha, \beta, \sigma, \mu)$ containing the stable, skewness, scale and location parameters respectively,
\begin{equation} \label{eq:stable_dist}
\varphi(u;\alpha,\beta,\sigma,\mu) = e^{- \vert \sigma u \vert^{\alpha}(1 - i \beta \text{sgn}(u)\zeta(u; \alpha)) 
+ iu\mu} 
\end{equation}
where $\text{sgn}(u)$ is the sign of $u$ and
\begin{equation*}
\zeta(u; \alpha) = 
\begin{cases}
\tan\left( \frac{\pi \alpha}{2} \right) \quad &\alpha \neq 1 \\
-\frac{2}{\pi}\log \vert u \vert &\alpha = 1
\end{cases}.
\end{equation*}
Let us consider a random variable $X$ drawn from a stable distribution, i.e., $X \sim S_{\alpha}(\beta,\sigma,\mu)$.
If it is symmetric and centered with $\beta = \mu = 0$ (i.e., $X \sim S_{\alpha}(\sigma)$), it is referred to as $S\alpha S$ (symmetric $\alpha$ stable) \cite{Samorodnitsky_Taqqu}.
For $\alpha < 2$, the random variable associated with the stable distribution has infinite $m^{\text{th}}$-moment for $m \geq \alpha$.

A ($S\alpha S$) L\'evy process $L_t$ is a stochastic process with:
\begin{enumerate}
    \item $L_0 = 0$.
    \item $L_t$ has independent increments.
    \item $(L_t - L_s) \sim S_{\alpha}((t - s)^{1/\alpha})$ for all $0 \leq s \leq t$.
    \item $L_t$ is stochastically continuous.    
    \item $L_t$ is RCLL (right continuous with left limits) almost surely.
\end{enumerate}

\subsection*{Orthogonal projection for single-head attention}
We now present the proof for Proposition~\ref{prop:distance_preserving}.
\begin{proof}

In $\R^d$, the distance between each query-key pair under Assumption 1 follows
$$
\begin{aligned}
\Vert \mathbf{Q}_i - \mathbf{K}_j \Vert
&= ( \Vert \mathbf{Q}_i \Vert^2 - 2 \mathbf{Q}_i^{\top} \mathbf{K}_j + \Vert \mathbf{K}_j \Vert^2 )^{1/2}, \\
&= ( \Vert \mathbf{W}_Q \mathbf{x}_i \Vert^2 - 2 \mathbf{x}_i^{\top} \mathbf{W}_Q^{\top} \mathbf{W}_K \mathbf{x}_j + \Vert \mathbf{W}_K \mathbf{x}_j \Vert^2 )^{1/2}, \\
&= (\Vert \mathbf{x}_i \Vert^2 - 2 \mathbf{x}_i^{\top} \tilde{\mathbf{W}}_K \mathbf{x}_j + \Vert \mathbf{x}_j \Vert^2)^{1/2}, \\
&= \Vert \mathbf{x}_i - \tilde{\mathbf{W}}_K \mathbf{x}_j \Vert, \\
&= \Vert \tilde{\mathbf{Q}}_i - \tilde{\mathbf{K}}_i \Vert,
\end{aligned}
$$
where $\tilde{\mathbf{W}}_K \coloneqq \mathbf{W}_Q^{\top} \mathbf{W}_K \in O(d)$, allowing us to set without loss of generality $\mathbf{Q} = \mathbf{x}$ and $\mathbf{K} \coloneqq \tilde{\mathbf{W}}_K \mathbf{x}$. 
For $\manifold = \sphere^{d - 1}$, the geodesic is simply the great circle $d_g(\mathbf{Q}_i, \mathbf{K}_j) = \cos^{-1}(\mathbf{Q}_i^{\top} \mathbf{K}_j)$.
Following a similar line of reasoning, this result naturally extends to the spherical manifold.

\end{proof}
Based on the result of Proposition~\ref{prop:distance_preserving}, for architectures satisfying Assumption~1 (orthogonality) in the main text, we implement the following configurations:
\begin{itemize}
    \item When $\mathbf{Q} \neq \mathbf{K}$, we set $\mathbf{W}_Q = \mathbf{I}$, $\mathbf{W}_K \in O(d)$.
    \item When $\mathbf{Q} = \mathbf{K}$, we set $\mathbf{W}_Q = \mathbf{W}_K = \mathbf{I}$.
\end{itemize}

When the numbers of attention heads $H > 1$, for each attention head, the corresponding submatrix of $\mathbf{W}_{Q,K}$ is instead semi-orthogonal and no longer qualifies as an isometry, i.e., $d_g(\mathbf{q}_i, \mathbf{k}_j) = d_g(\mathbf{x}_i, \mathbf{x}_j)$.
See Appendix~\ref{supp_sec:semi_orthogonality} for a detailed discussion.

\subsection*{Convergence of PDEs} 
We prove Theorem~\ref{thm:fna_pde} by first showing the following lemma.
\begin{lemma} \label{lemma:inf_gen} 

The infinitesimal generator from the normalization scheme in the main text yields
\begin{equation} \label{eq:infinitesimal_generator}
\mathcal{H} f = -c \frac{(-\Delta)^{\alpha/2} (f\rho) - f (-\Delta)^{\alpha/2} \rho }{\rho}.
\end{equation}
    
\end{lemma}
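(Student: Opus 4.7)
The plan is to work directly with the definition of the infinitesimal generator, isolate the normalization into numerator and denominator, and then expand both in $t$ using the asymptotic behaviour of the heavy-tailed kernel $\Phi_\alpha$. Under Assumptions 1 and 2 (combined with Proposition~\ref{prop:distance_preserving}), we may take $\mathbf{W}_Q=\mathbf{W}_K=\mathbf{I}$, so the FNA kernel reduces to
\[
k_t^{\text{FNA}}(\mathbf{x},\mathbf{y}) = \frac{\Phi_\alpha(\Vert\mathbf{x}-\mathbf{y}\Vert/t^{1/\alpha})}{\int_{\mathbb R^d}\Phi_\alpha(\Vert\mathbf{x}-\mathbf{y}'\Vert/t^{1/\alpha})\rho(\mathbf{y}')\,\mathrm{d}\mathbf{y}'}.
\]
Writing $N_t(\mathbf{x})\coloneqq\int \Phi_\alpha(\Vert\mathbf{x}-\mathbf{y}\Vert/t^{1/\alpha})(f\rho)(\mathbf{y})\,\mathrm{d}\mathbf{y}$ and $Z_t(\mathbf{x})\coloneqq\int \Phi_\alpha(\Vert\mathbf{x}-\mathbf{y}\Vert/t^{1/\alpha})\rho(\mathbf{y})\,\mathrm{d}\mathbf{y}$, the push-forward becomes $T_{\mu,t}^{\text{FNA}}[f]=N_t/Z_t$, so $\mathcal{H}f$ is the $t\to 0$ limit of $(N_t/Z_t-f)/t$.

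The main technical ingredient is a single asymptotic lemma: for a sufficiently regular function $g\in\mathcal{C}^3(\mathbb R^d)$,
\[
\int_{\mathbb R^d}\Phi_\alpha\!\left(\tfrac{\Vert\mathbf{z}\Vert}{t^{1/\alpha}}\right)g(\mathbf{x}+\mathbf{z})\,\mathrm{d}\mathbf{z}
= C_0\,t^{d/\alpha}g(\mathbf{x}) - \tfrac{t^{(d+\alpha)/\alpha}}{c_{d,\alpha}}(-\Delta)^{\alpha/2}g(\mathbf{x}) + o(t^{(d+\alpha)/\alpha}).
\]
I would establish this by writing the left-hand side as $C_0 t^{d/\alpha}g(\mathbf{x})+\int \Phi_\alpha(\Vert\mathbf{z}\Vert/t^{1/\alpha})[g(\mathbf{x}+\mathbf{z})-g(\mathbf{x})]\,\mathrm{d}\mathbf{z}$, splitting the remaining integral into a near-field piece (where $\Vert\mathbf{z}\Vert\le t^{\beta/\alpha}$ for some $0<\beta<1$) controlled by the $\mathcal{C}^3$ regularity of $g$, and a far-field piece where $\Phi_\alpha(\Vert\mathbf{z}\Vert/t^{1/\alpha})$ is well approximated by its power-law tail $t^{(d+\alpha)/\alpha}\Vert\mathbf{z}\Vert^{-(d+\alpha)}$. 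The far-field piece then matches the singular-integral definition (equation~\ref{eq:frac_laplacian_euclidean}) of the fractional Laplacian up to the normalization constant $c_{d,\alpha}$, while the near-field piece contributes only at higher order because the cancellation of odd terms in the Taylor expansion of $g$ combined with the scaling of the kernel renders it $o(t^{(d+\alpha)/\alpha})$.

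Applying this expansion separately to $N_t$ with $g=f\rho$ and to $Z_t$ with $g=\rho$, dividing the two, and using the first-order quotient expansion $(A-tB)/(C-tD) = A/C - t(BC-AD)/C^2 + O(t^2)$, one obtains
\begin{align*}
T_{\mu,t}^{\text{FNA}}[f](\mathbf{x}) &= f(\mathbf{x}) - \tfrac{t}{C_0 c_{d,\alpha}} \cdot \tfrac{(-\Delta)^{\alpha/2}(f\rho)(\mathbf{x}) - f(\mathbf{x})(-\Delta)^{\alpha/2}\rho(\mathbf{x})}{\rho(\mathbf{x})} + o(t).
\end{align*}
Subtracting $f$, dividing by $t$ and taking the limit produces exactly equation~\ref{eq:infinitesimal_generator} with $c=1/(C_0 c_{d,\alpha})$.

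The main obstacle is making the asymptotic expansion rigorous: heavy-tailed kernels preclude the classical moment-based Coifman--Lafon/Nadler arguments used in the Gaussian case, so the near/far-field decomposition must be controlled carefully, in particular showing that the cut-off error vanishes as $t\to 0$ and that the far-field integral converges in the principal-value sense to $(-\Delta)^{\alpha/2}g$. The $\mathcal{C}^3$ hypothesis on $\rho$ (and hence on $f\rho$ for smooth $f$) is exactly what is needed to tame the singularity at $\mathbf{z}=0$ through cancellation of the first-order term. For the spherical case $\manifold=\sphere^{d-1}$, the same strategy applies with the geodesic distance replacing $\Vert\mathbf{x}-\mathbf{y}\Vert$ and $(-\Delta_{\manifold})^{\alpha/2}$ interpreted via its spectral definition (equation~\ref{eq:frac_laplacian}); local normal coordinates reduce the computation to the Euclidean case up to lower-order curvature corrections that do not affect the leading $t$-order.
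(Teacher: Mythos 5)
Your proposal is correct, and its outer scaffolding is the same as the paper's: expand the numerator and the normalizing denominator separately, apply the first-order quotient expansion, and take the limit of $(T_{\mu,t}^{\text{FNA}}[f]-f)/t$. The difference lies in how the small-$t$ expansion of the kernel smoothing is justified. The paper identifies the normalized kernel average with the solution of the fractional diffusion equation $u_t=-c(-\Delta)^{\alpha/2}u$ and simply Taylor-expands the semigroup in time, $u(\mathbf{x},\varepsilon)=g-\varepsilon c(-\Delta)^{\alpha/2}g+\mathcal{O}(\varepsilon^2)$; this is short and applies verbatim on any manifold carrying the heat kernel, but it tacitly treats the implemented profile $\Phi_\alpha$ as the exact fractional heat kernel, whereas equation~\ref{eq:beta_scale_kernel} only gives two-sided bounds. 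You instead prove a direct asymptotic for convolution against the power-law profile via a near/far-field split, matching the far field to the singular-integral definition (equation~\ref{eq:frac_laplacian_euclidean}); the bookkeeping is sound (leading term of order $t^{d/\alpha}$, correction of order $t^{(d+\alpha)/\alpha}$, near-field error $o(t^{(d+\alpha)/\alpha})$ for any cutoff exponent $0<\beta<1$ when $\alpha<2$), the $C_0t^{d/\alpha}$ factors cancel in the quotient so the effective small parameter is $t$, and you land on the stated generator with $c=1/(C_0c_{d,\alpha})$. What your route buys is rigor for the kernel actually used in FNA, without assuming it solves the fractional heat equation; what it costs is generality: the argument is intrinsically Euclidean and covers only $\alpha<2$ (the $\alpha=2$ case reverts to the classical moment expansion), and your claim that the spherical case follows in normal coordinates ``up to lower-order curvature corrections'' is too quick for a non-local operator---the spectrally defined $(-\Delta_{\sphere^{d-1}})^{\alpha/2}$ receives leading-order contributions from the entire sphere, and there the paper's semigroup device (or a subordination formula) is the cleaner way to close the argument.
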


\begin{proof}

\emph{Step 1}: Similar to the standard case, we start from the diffusion equation \cite{Coifman2006,Evans2022}
\begin{equation}
u_t = -c (-\Delta)^{\alpha/2} u, \ u(\mathbf{x},0) = g(\mathbf{x})
\end{equation}
for $\mathbf{x} \in \manifold$.
The solution at time $t = \varepsilon$ is
\begin{equation}
u(\mathbf{x}, \varepsilon) = \frac{1}{Z_{\alpha, d}} \int_{\manifold} k_{\varepsilon}(\mathbf{x}, \mathbf{x}') g(\mathbf{x}') \d \vol_{\mathbf{x}'}
\end{equation}
where $Z_{\alpha, d}$ is the normalization constant and $k_{\varepsilon}$ depends on $\alpha$.
For small $\varepsilon$, we can approximate $u(\mathbf{x}, \varepsilon)$ through a Taylor expansion
\begin{align}
\begin{aligned}
u(\mathbf{x},\varepsilon) 
&= u(\mathbf{x}, 0) + \frac{\partial}{\partial t} u(\mathbf{x},0) \varepsilon + \bo(\varepsilon^2) \\
&= g - \varepsilon c (-\Delta)^{\alpha/2} g + \bo(\varepsilon^2).
\end{aligned}
\end{align}

\emph{Step 2}: The row normalization scheme is applied to the FNA score (equation~\ref{eq:frac_attn_score}) via
\begin{align}
\FAttn [f](\mathbf{q})
\coloneqq \frac{  \sum_j \Phi_{\alpha} \rbrac{ \frac{d_g(\mathbf{q}, \mathbf{k}_j)}{ \varepsilon^{1/\alpha}} } f\left(\mathbf{k}_j\right)}{ \sum_j \Phi_{\alpha} \rbrac{ \frac{d_g(\mathbf{q},\mathbf{k}_j)}{\varepsilon^{1/\alpha}} } } 
= \frac{ \frac{1}{n} \sum_j \Phi_{\alpha} \rbrac{ \frac{d_g(\mathbf{x},\mathbf{x}_j)}{ \varepsilon^{1/\alpha}} } f\left(\mathbf{x}_j\right)}{ \frac{1}{n} \sum_j \Phi_{\alpha} \rbrac{ \frac{d_g(\mathbf{x},\mathbf{x}_j)}{\varepsilon^{1/\alpha}} } },
\end{align}
where the second equality is established under the condition of Assumption 1 \& 2, resulting in $d_g(\mathbf{q}_i, \mathbf{k}_i) = d_g(\mathbf{W}_Q \mathbf{x}_i, \mathbf{W}_K \mathbf{x}_j) = d_g(\mathbf{x}_i, \mathbf{x}_j)$ (i.e., isometry).

\emph{Step 3}: After taking the limit $n \rightarrow \infty$, obtain the expression for the Taylor expansion w.r.t $\varepsilon$:
\begin{widetext}
$$
\begin{aligned}
& \lim _{n \rightarrow \infty} \frac{ \frac{1}{n} \sum_j \Phi\rbrac{ \frac{d_g(\mathbf{x},\mathbf{x}_j)}{\varepsilon^{1/\alpha}} } f\left(\mathbf{x}_j\right)}{ \frac{1}{n} \sum_j \Phi\rbrac{ \frac{d_g(\mathbf{x},\mathbf{x}_j)}{\varepsilon^{1/\alpha}} } }
= \frac{\int_{\manifold} k_{\varepsilon}(\mathbf{x}, \mathbf{x}') f\left(\mathbf{x}'\right) \rho(\mathbf{x}') \d \vol_{\mathbf{x}'}}{\int_{\manifold} k_{\varepsilon}(\mathbf{x}, \mathbf{x}') \rho(\mathbf{x}') \d \vol_{\mathbf{x}'}} 
=T_{\mu,t}^{\operatorname{FNA}}[f](\mathbf{x}) \\
= & \frac{(f \rho)\left( \mathbf{x} \right) - \varepsilon c (-\Delta)^{\alpha/2}(f \rho) \left( \mathbf{x} \right) + \bo\left(\varepsilon^2\right)}{\rho(\mathbf{x}) - \varepsilon c (-\Delta)^{\alpha/2} \rho(\mathbf{x}) + \bo\left(\varepsilon^2\right)} \\
\stackrel{(\star)}{=} & \sbrac{ (f \rho)\left( \mathbf{x} \right) - \varepsilon c (-\Delta)^{\alpha/2}(f \rho) \left( \mathbf{x} \right) + \bo\left(\varepsilon^2\right) } \frac{1}{\rho(\mathbf{x})} \left(1 + \varepsilon \frac{(-\Delta)^{\alpha/2} \rho(\mathbf{x})}{\rho(\mathbf{x})} + \bo\left(\varepsilon^2\right)\right) \\
= & f(\mathbf{x}) - \frac{\varepsilon c (-\Delta)^{\alpha/2}(f\rho)(\mathbf{x}) }{\rho(\mathbf{x})} + \varepsilon c f(\mathbf{x}) \frac{(-\Delta)^{\alpha/2} \rho(\mathbf{x})}{\rho(\mathbf{x})} + \bo(\varepsilon^2) \\
= & f(\mathbf{x}) - \varepsilon \frac{c}{\rho(\mathbf{x})} \rbrac{ (-\Delta)^{\alpha/2}(f \rho)(\mathbf{x}) - f(\mathbf{x}) (-\Delta)^{\alpha/2}(\rho)(\mathbf{x}) } + \bo(\varepsilon^2).
\end{aligned}
$$
\end{widetext}

The third equality $(\star)$ is obtained through the expansion of the function $h(z) = 1/z$ applied to:
$$
\begin{aligned}
\frac{1}{\rho - \varepsilon c (-\Delta)^{\alpha/2} \rho + \bo\left(\varepsilon^2\right)}
= \frac{1}{\rho}\left(1 + \varepsilon \frac{(-\Delta)^{\alpha/2} \rho}{\rho} + \bo\left(\varepsilon^2\right)\right).
\end{aligned}
$$
We conclude the proof by taking the limit of $\varepsilon \rightarrow 0$
\begin{equation}
\mathcal{H}^{\text{FNA}}[f] 
\coloneqq \lim_{\varepsilon \rightarrow 0} \frac{T_{\mu,t}^{\operatorname{FNA}}[f](\mathbf{x}) - f(\mathbf{x})}{\varepsilon}.    
\end{equation}
    
\end{proof}

\subsection*{Experiments}

\vspace{.5em}

\textbf{Hardware.}
All model training and pretrained analysis were realized on Tesla V100-SXM2-32GB GPUs. 

\vspace{.5em}

\noindent
\textbf{Text classification.}
We conducted two sets of model training experiments for text classification as described in the main text: shallow models of depth 1 and 2 (Fig.~\ref{fig:imdb_ablation}), and deeper 6-layer models (Fig.~\ref{fig:train_imdb}).

In Fig.~\ref{fig:imdb_ablation}, we focus on single-head attention models ($H=1$) that satisfy Assumption 1 (orthogonality), and further examine both cases where Assumption 2 (symmetry) holds and where it does not.
We perform a parameter sweep over depths $L \in {1,2}$ and embedding dimensions $d \in {8,16,32,64}$, with fully connected hidden layers fixed at 256.
Each configuration is trained for 25 epochs across 5 independent runs using Adam \cite{Kingma2015}, with an initial learning rate of $1\times10^{-4}$ decayed by a factor of 5 at epoch 19, and a batch size of 16.
The FNA models are trained for fractional orders $\alpha \in \{1, 1.2, 1.4, 1.6, 1.8, 2\}$.

In Fig.~\ref{fig:train_imdb}, all models have 6 layers, 8 attention heads, an embedding dimension of 256 and are trained for 20 epochs. We use an initial learning rate of $10^{-4}$ and decay it by a factor of 10 at epoch 15. The batch size is set to 32.

For both sets of text classification experiments, we set the maximal sequence length to 512 and the momentum terms in Adam to $(\beta_1, \beta_2) = (0.9, 0.999)$. No weight decay is deployed.
For all text classification tasks, we use the distance scaling $\kappa = \frac{\sqrt{d_H}}{2^{1/d_H} - 1}$ for multiscale FNA ($\alpha < 2$) where $d_H = d / H$ is the head dimension. We also set $\dman = d_H$ in $\Phi_{\alpha}$ (equation~\ref{eq:hk_dichotomy_2}), consistent with the canonical choice in fractional diffusion maps \cite{Antil2021} when $H = 1$.
For $\alpha = 2$, we set $\kappa = \sqrt{d_H}$ for this and all other task types.
This choice aligns with prior practices using the dot product \cite{Vaswani2017} or (squared) Euclidean distance \cite{Kim2021}, ensuring that the resulting values do not become excessively large and push the softmax function into regions with vanishing gradients.

\vspace{.5em}

\noindent
\textbf{Attention graph node ablation.}
In Figs.~\ref{fig:dynamic_inference_qqv} and \ref{fig:dynamic_inference_qkv}, we consider the Bernoulli probability $p = 0, 0.1, 0.2,  \cdots, 1$, of random node removal by masking its corresponding index location in the attention score (along the column).
This experiment is carried out on all pretrained single-layer Transformers in Fig.~\ref{fig:imdb_ablation}.
During inference, the batch size is set to 64.

\vspace{.5em}

\noindent
\textbf{Markov chain diagonalization.}
Let us define the diagonal matrix $\mathbf{D}$ defined by $D_{ii} = \sum_{j} \tilde{C}_{ij}$, then the attention weight matrix $\tilde{\mathbf{A}}$ in equation~\ref{eq:frac_attn_weights} is adjoint to the symmetrical matrix \cite{Nadler2008}
\begin{equation}
\hat{\mathbf{A}} \coloneqq \mathbf{D}^{-1/2} \tilde{\mathbf{C}} \mathbf{D}^{-1/2}.
\end{equation}
Thus, $\tilde{\mathbf{A}}$ shares the same set of eigenvalues as $\hat{\mathbf{A}}$ which are all real and does not need to be explicitly computed:
\begin{equation}
1=\eta_0 \geq \eta_2 \geq \cdots \geq \eta_{n-1}.
\end{equation}
If floating point errors cause $\hat{\mathbf{A}}$ to be asymmetrical, the matrix $(\hat{\mathbf{A}} + \hat{\mathbf{A}}^{\top})/2$ can be used for diagonalization in its place to guarantee numerical stability.
Additionally, the eigenvectors of $\hat{\mathbf{A}}$ form an orthonormal basis $\{ \boldsymbol{v}_j \}_{j=0}^{n-1}$.
The left and right eigenvectors of $\tilde{\mathbf{A}}$, denoted $\boldsymbol{\phi}_j$ and $\boldsymbol{\psi}_j$ are related to $\boldsymbol{v}_j$ through
\begin{equation}
\boldsymbol{\phi}_j = \mathbf{D}^{1/2} \boldsymbol{v}_j, \quad \boldsymbol{\psi}_j = \mathbf{D}^{-1/2} \boldsymbol{v}_j
\end{equation}
and they are also bi-orthonormal, i.e., $\left\langle \boldsymbol{\phi}_i, \boldsymbol{\psi}_j\right\rangle = \delta_{i, j}$.

In Fig.~\ref{fig:fna_token_and_spectrum}\hyperref[fig:fna_token_and_spectrum]{a}, we use the setting $\mathbf{W}_Q = \mathbf{W}_K = \mathbf{I}$ and single attention head $H = 1$. 
To compute $\tilde{\mathbf{C}}$, we set $\kappa = \sqrt{\varepsilon}$ in equation~\ref{eq:frac_attn_score} with the bandwidth parameter $\varepsilon = 1 \times 10^{-4}$.
The eigenvalues of $\tilde{\mathbf{A}}$ are then computed following the same procedure described above to generate the figure.
By setting $t = \varepsilon^{\alpha/2}$, the fractional Laplacian eigenvalues $\lambda_i = -t^{-1} \log \eta_i$ and the eigenfunctions $\boldsymbol{\psi}_i$ satisfy $\tilde{\mathbf{A}} \boldsymbol{\psi}_i = e^{t\lambda_i} \boldsymbol{\psi}_i$ for the heat kernel as in equation~\ref{eq:MC} (see Appendix~\ref{supp_sec:dm_algorithn_continuous}).

\vspace{.5em}

\noindent
\textbf{Diffusion maps for FNA.}
Diffusion map analyses are performed under Assumptions 1 and 2 stated in the main text.
For sequences with the maximum length $n = n_{\max}$, the entire sequence is retained.
For shorter sequences ($n < n_{\max}$), as padding masks are applied to the final $(n_{\max} - n)$ keys but not queries, we truncate all padding indices and operate on the resulting $n \times n$ fractional attention score matrix $\tilde{\mathbf{C}}$ (equation~\ref{eq:frac_attn_score}) in order to guarantee the symmetry condition $\mathbf{Q} = \mathbf{K}$.

The diffusion map embeddings in Fig.~\ref{fig:fna_mechanism}\hyperref[fig:fna_mechanism]{b} are obtained through the diffusion maps algorithm with the top two components in equation~\ref{eq:diffusion_map}.
Additionally, we compute the center of the token embeddings and plot only those tokens whose Euclidean distance from the center exceeds 5.95 (units).
We compute $\tilde{\mathbf{C}}$ by setting $\kappa = \frac{\sqrt{d_H}}{2^{1/d_H} - 1} \sqrt{\varepsilon}$ where $\varepsilon = 0.1$.
The diagonalization method for the fractional self-attention weights (equation~\ref{eq:frac_attn_weights}) is detailed above. 
This allows us to establish the eigen-decomposition (equation~\ref{eq:attn_weight_decomposition}) where $\boldsymbol{\phi}_i$ and $\boldsymbol{\psi}_i$ are the left and right eigenvectors of $\tilde{\mathbf{A}}$; $\lambda_i$ are the corresponding eigenvalues.
The diffusion distance between two points is defined by
\begin{align} \label{eq:diffusion_dist}
\begin{aligned}
D_{\tau}^2\left(\mathbf{x}_i, \mathbf{x}_j\right) 
&= \left\|p\left(\tau, \mathbf{y} \mid \mathbf{x}_i \right) - p\left(\tau, \mathbf{y} \mid \mathbf{x}_j \right)\right\|_w^2 \\
&= \sum_{\mathbf{y}}\left(p\left(\tau, \mathbf{y} \mid \mathbf{x}_i\right)-p\left(\tau, \mathbf{y} \mid \mathbf{x}_j\right)\right)^2 w(\mathbf{y})
\end{aligned}
\end{align}
with the weight function $w(\mathbf{y}) = 1/\boldsymbol{\phi}_0(\mathbf{y})$ accounting for the (empirical) local density of points \cite{Coifman2006,Nadler2008}.
The following result establishes the diffusion distance and the Euclidean distance in the diffusion map space with all $n-1$ eigenvectors:
\begin{equation}
D_{\tau}^2\left(\mathbf{x}_i, \mathbf{x}_j\right) 
=\left \Vert \boldsymbol{\Psi}_{\tau}(\mathbf{x}_i)-\boldsymbol{\Psi}_{\tau}\left(\mathbf{x}_j\right)\right \Vert^2.
\end{equation}

\vspace{.5em}

\noindent
\textbf{Visual processing.}
In Fig.~\ref{fig:train_cifar10}, the models with 4 layers and 6 attention heads are trained for 125 epochs.
We use Adam with an initial learning rate of $5 \times 10^{-4}$ and decay it by a factor of 10 at epoch 100. 
Batch size is set to 64.
The momentum terms are set as $(\beta_1, \beta_2) = (0.9, 0.95)$. 
No weight decay is deployed.
We set $\kappa = d_H$ in equation~\ref{eq:frac_attn_score}. 

\vspace{.5em}

\noindent
\textbf{Translation.}
We use the same architecture setting as in \cite{Vaswani2017}.
Each model contains 6 encoding and decoding layers with 8 attention heads.
We set $\kappa = d_H$ in equation~\ref{eq:frac_attn_score}. 
We use Adam with an initial learning rate of $2.2 \times 10^{-4}$ for $\mathbf{W}_{Q,K} \in O(d)$ (orthogonal), and $2 \times 10^{-4}$ for the (general) case $\mathbf{W}_{Q,K} \notin O(d)$.
We employ PyTorch’s \texttt{ReduceLROnPlateau} learning scheme with a reduction factor of 0.75. 
The learning rate schedule is shown in Fig.~\ref{fig:lr_translation} in Appendix~\ref{supp_sec:additional_machine_translate}.
The momentum terms are set as $(\beta_1, \beta_2) = (0.9, 0.98)$; no weight decay is deployed.

\vspace{.5em}

\noindent \textbf{Data availability} 
All data from this study will be made available in a Zenodo repository.

\vspace{.5em}

\noindent \textbf{Code availability} 
The code used for the simulations and analyses of the FNA models and Transformer training will be made available upon submission.



\setcounter{figure}{0}
\renewcommand{\thefigure}{A\arabic{figure}}

\appendix
\section{}


\begin{figure}[h]
    \centering
    \includegraphics[scale=0.68]{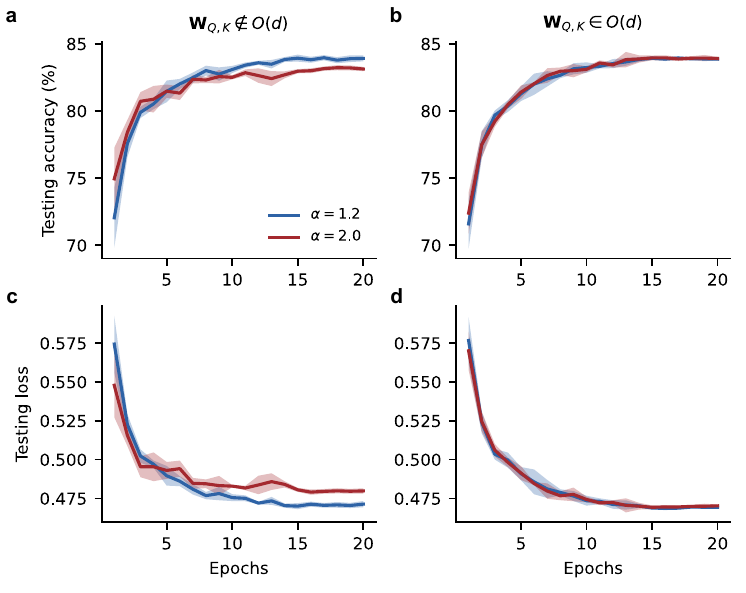}
    \caption{\textbf{Learning curves for text classification.} \textbf{a} Mean test accuracy across five trials for multiscale FNA ($\alpha = 1.2$) and local attention ($\alpha = 2$) using $\mathbf{W}_{Q,K} \notin O(d)$. 
    The shaded area indicates one standard deviation. 
    \textbf{b} Same as \textbf{a}, but with $\mathbf{W}_{Q,K} \in O(d)$. 
    \textbf{c} Same as \textbf{a}, but showing test loss. 
    \textbf{d} Same as \textbf{c}, but with $\mathbf{W}_{Q,K} \in O(d)$.}
    \label{fig:train_imdb_spherical}
\end{figure}

\begin{figure*}[!t]
\centering
\includegraphics[scale=0.85]{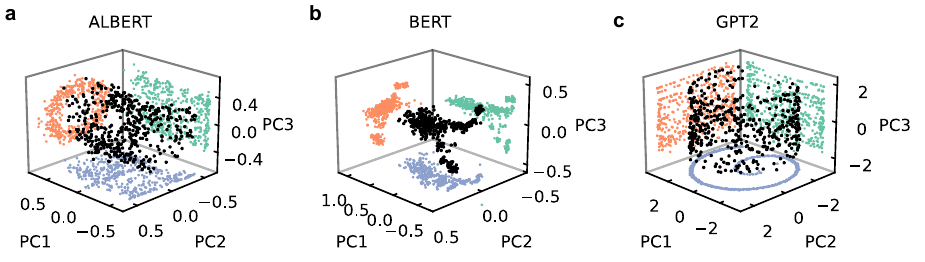}
\caption{\textbf{PCA of token embeddings.} \label{fig:pretrained_large_model_embeddings} 
(a--c) Top 3 principal components of embeddings of a random input sentence (generated by GPT2) represented by scattered black dots for pretrained ALBERT, BERT and GPT2 respectively.
The other colored points are the projections onto two principal components, i.e., blue for PC1 vs PC2, green for PC1 and PC3 and orange for PC2 and PC3.
}
\end{figure*}

\subsection{FNA on spherical manifold}  \label{supp_sec:spherical_fna}

By applying an explicit $L^2$-normalization to the embeddings $\mathbf{X}$, one can naturally project them onto the unit sphere $\sphere^{d - 1}$.
In this spherical setting, the Euclidean distance $\Vert \mathbf{q}_i - \mathbf{k}_j \Vert$ in equation~\ref{eq:frac_attn_score} is replaced by the geodesic distance on the sphere:
$$
d_g(\mathbf{q}_i,\mathbf{k}_j) 
\coloneqq d_g(\iota_Q(\mathbf{x}_i),\iota_K(\mathbf{x}_j))
= \arccos(\mathbf{x}_i \mathbf{W}_Q^{\top} \mathbf{W}_K \mathbf{x}_j).
$$
Note that the isometric embedding will generally not hold if $\mathbf{W}_{Q,K} \notin O(d)$. 
In order to guarantee the well-posedness of $d_g(\mathbf{q}_i,\mathbf{k}_j)$, we define
\begin{equation}
\iota_{Q,K}(\mathbf{x}) = 
\begin{cases}    
\mathbf{W}_{Q,K} \mathbf{x} & \text{ if } \mathbf{W}_{Q,K} \in O(d) \\
\frac{ \mathbf{W}_{Q,K} \mathbf{x}}{\Vert \mathbf{W}_{Q,K} \mathbf{x} \Vert} & \text{ otherwise.} 
\end{cases}
\end{equation}
Now, the FNA score as in equation~\ref{eq:frac_attn_score} for $\alpha < 2$ is instead defined by
\begin{equation} \label{eq:spherical_frac_attn_score}
\tilde{C}_{i,j} \coloneqq \Phi_{\alpha} \left( \frac{d_g(\iota_Q(\mathbf{x}_i), \iota_K(\mathbf{x}_j))}{\kappa}  \right)
\end{equation}
Due to floating point errors, we mask the values of query-key dot products $\iota_Q(\mathbf{x}_i)^{\top} \iota_K(\mathbf{x}_j)$ that fall below $-1$ and exceed $1$ by $-1$ and $1$ respectively.
This bounds the query-key dot-product within $[-1,1]$, ensuring the domain of $\arccos$ function remains well-defined.
Similar to the Euclidean case ($\mathbb{R}^d$), the non-local kernel $\Phi_{\alpha}(z)$ explicitly depends on $d$.
For multiscale FNA ($\alpha < 2$), we use the scale parameter $\kappa = \frac{\pi}{\pi^{1/\dman} - 1}$.
When $H = 1$ (single attention-head), the intrinsic dimension of the manifold satisfies $\dman = d - 1$. 
Otherwise, we set $\dman = d_H$ in $\Phi_{\alpha}$ (equation~\ref{eq:hk_dichotomy_2}).
For local attention ($\alpha = 2$), we use $\kappa = 1$.
Additionally, as the maximum distance between any two points on $r \sphere^{d-1}$ is $\pi r$ ($r \coloneqq \kappa^{-1}$), this acts as the masking value for $d_g(\iota_Q(\mathbf{x}_i), \iota_K(\mathbf{x}_j))$.

Using the same settings as in the main text, we evaluate the spherical variant of FNA on the IMDb dataset. 
The initial learning rate is set to $1.2 \times 10^{-5}$ and decayed by a factor of 10 at epoch 15.
The corresponding training curves and performance results are shown in Fig.~\ref{fig:train_imdb_spherical} and Table~\ref{table:train_imdb_spherical}, respectively.

\begin{table}[h]
\centering
\begin{tabular}{l|c|c|c|c}
\specialrule{.1em}{.05em}{.05em}  
Model & Mean & Median & Best & Worst \\
\hline
FNA & 83.83\% & \textbf{83.91\%} & \textbf{84.06\%}  & 83.41\% \\
LA & 83.08\% & 83.12\% & 83.27\% & 82.86\% \\
\hline 
op-FNA & $\textbf{83.86\%}$ & 83.85\% & 84.02\% & \textbf{83.75\%} \\
op-LA & 83.83\% & 83.90\% & 83.96\% & 83.66\% \\
\specialrule{.1em}{.05em}{.05em} 
\end{tabular}
\vspace{0.2cm}
\caption{\label{table:train_imdb_spherical} 
Test accuracy on IMDb dataset acquired over 5 independent iterations.
We use the setting $\manifold \coloneqq \sphere^{d - 1}$.
Best performance of each category is bold.
}
\end{table}


\subsection{Distribution of token embeddings \label{supp_sec:pretrained_model_embeddings}}

To highlight the non-uniformity of $\rho$ in Theorem~\ref{thm:fna_pde}, we examine the embeddings of an identical input sequence in three different, popular, pre-trained Transformer models: ALBERT \cite{Lan2020}, BERT \cite{Devlin2018} and GPT2 \cite{Radford2019}. In Fig.~\ref{fig:pretrained_large_model_embeddings}\hyperref[fig:pretrained_large_model_embeddings]{a}, \hyperref[fig:pretrained_large_model_embeddings]{c}, ALBERT and GPT2 reveal circular and spiral patterns, respectively. 
These geometrical structures have been linked to the encoding of positional information within a sequence \cite{Cai2021}. In contrast, the clustering of embeddings in BERT has been associated with the representation of semantic information \cite{Devlin2018} (Fig.~\ref{fig:pretrained_large_model_embeddings}\hyperref[fig:pretrained_large_model_embeddings]{b}).
Thus, the underlying distribution $\rho$ plays an important role.

\subsection{Orthogonality and semi-orthogonality} \label{supp_sec:semi_orthogonality}

A token's query and key projections involve linear transformations $\iota: \R^d \rightarrow \R^{d_k}$.
When there are multiple attention heads ($H > 1$), strict orthogonality is no longer preserved due to the division of the embedding dimension by $H$.
Instead we have $\iota: \R^d \rightarrow \R^{d_H}$ where $H$ is chosen so that $d \mod H \equiv 0$ and the head dimension $d_H = d / H$ is thus an integer.
Enforcing an orthogonal matrix $W_Q$ throughout training can be implemented in PyTorch via the \texttt{orthogonal} function. 
The projection for each attention head would be $\mathbf{W}_Q^{(i)} x \in \R^{d_H}$, where $\mathbf{W}_Q^{(i)}$ is the submatrix $[\mathbf{W}_Q]_{(i-1)H +1: iH, :}$ for $i = 1,\cdots,H-1$.
All the rows of $\mathbf{W}_Q$ are orthogonal, and this does not make $\mathbf{W}_Q^{(i)} \in \R^{d_H \times d}$ semi-orthogonal from the left, i.e., $\mathbf{W}_Q^{(i)} x$ no longer preserves the Euclidean distance for $\mathbf{x} \in \R^d$. 

\begin{figure*}[t]
    \centering
    \includegraphics{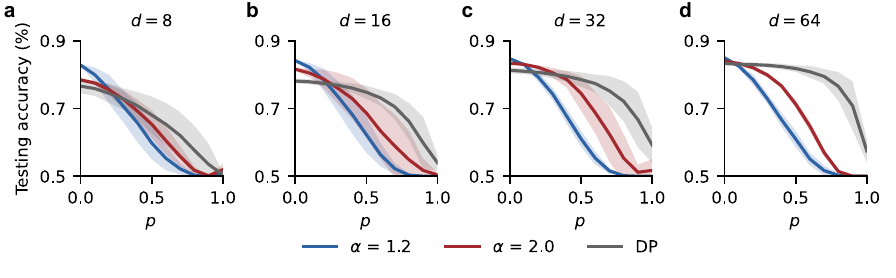}
    \caption{\textbf{Ablation of nodes from the attention graph.} Testing accuracy after the randomly ablating nodes from the attention graph with probability $p$. Lines represent the mean across five trials for each network using $\mathbf{Q} \neq \mathbf{K}$. Shaded regions represent the standard deviation. 
    \textbf{a} $d=8$. \textbf{b} $d=16$. \textbf{c} $d=32$. \textbf{d} $d=64$.}
    \label{fig:dynamic_inference_qkv}
\end{figure*}

\begin{figure}[h]
    \centering
    \includegraphics[scale=0.65]{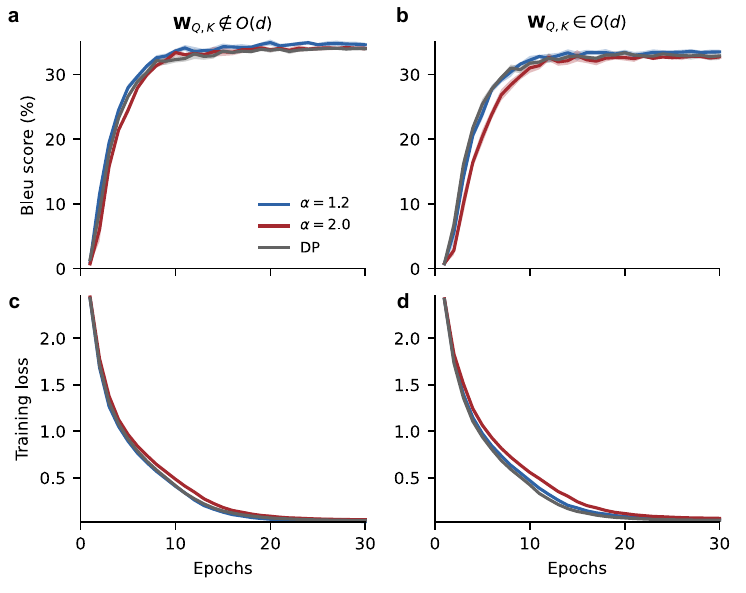}
    \caption{\textbf{Learning curves for machine translation.} \textbf{a} Mean accuracy on the testing dataset across five trials for FNA ($\alpha = 1.2, 2$) and Transformer (DP) using $\mathbf{W}_{Q,K} \notin O(d)$. The shaded area represents a standard deviation. 
    \textbf{b} Same as \textbf{a}, except using $\mathbf{W}_{Q,K} \in O(d)$. 
    \textbf{c} Same as \textbf{a}, except showing the loss on the testing dataset. 
    \textbf{d} Same as \textbf{c}, except using $\mathbf{W}_{Q,K} \in O(d)$.}
    \label{fig:train_translation}
\end{figure}

\begin{figure}[h]
    \centering
    \includegraphics[scale=0.65]{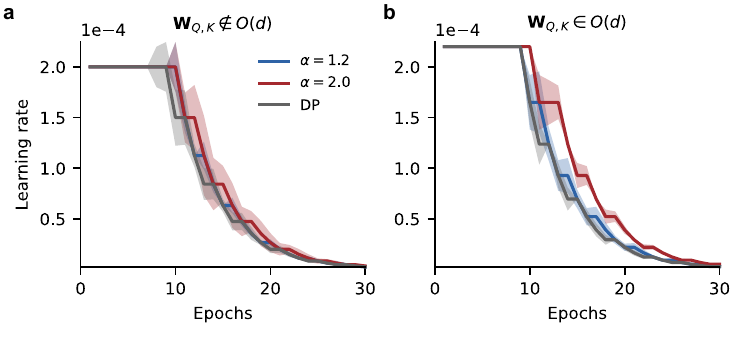}
    \caption{\textbf{Learning rates for machine translation.} \textbf{a} Mean learning rate for the training process across five trials for FNA ($\alpha = 1.2, 2$) and Transformer (DP) using $\mathbf{W}_{Q,K} \notin O(d)$. The shaded area represents a standard deviation. 
    \textbf{b} Same as \textbf{a}, except using $\mathbf{W}_{Q,K} \in O(d)$. 
    \label{fig:lr_translation}
    }
\end{figure}

\subsection{Continuous limits of the kernel construction \label{supp_sec:dm_algorithn_continuous}}

Here we outline a more general construction of the attention weights $\tilde{\mathbf{A}}$ in equation~\ref{eq:frac_attn_weights} encountered in diffusion maps \cite{Coifman2006}.
Construct the diagonal matrix $D_{ii} = \sum_{j} \tilde{C}_{ij}$ and the new kernel $\mathbf{K}^{(a)}$
\begin{equation} \label{eq:normalized_diffusion_matrix}
\tilde{\textbf{C}}^{(a)} = \textbf{D}^{-a} \tilde{\textbf{C}} \textbf{D}^{-a}.
\end{equation}
After applying the weighted graph Laplacian normalization based on the parameter $a$ to form the new kernel (equation~\ref{eq:normalized_diffusion_matrix}), the alternative kernel is simply its row-normalization:
\begin{align} 
D_{ii}^{(a)} &= \sum_{j} \tilde{C}_{ij}^{(a)}, \label{eq:normalized_kernel_deg} \\
\mathbf{A}^{(a)} &= \rbrac{\mathbf{D}^{(a)}}^{-1} \tilde{\mathbf{C}}^{(a)} \equiv \rownorm(\tilde{\mathbf{C}}^{(a)}). \label{eq:MC}
\end{align}   
The case of equation~\ref{eq:frac_attn_weights} is equivalent to $\mathbf{A}^{(0)}$ where we suppress the superscript for notation simplicity.
In the limit of an infinite sequence $n \rightarrow \infty$, the summation of equation~\ref{eq:frac_attn_score} can be written in the form of Monte Carlo integration:
\begin{align}
\lim_{n \rightarrow 
\infty} \frac{1}{n} \sum_{i = 1}^n A_{ij}^{(a)} 
= \int_{\manifold} k_{\varepsilon}^{(a)}(\mathbf{x},\mathbf{y}) q(\mathbf{y})  \d \vol_{\mathbf{y}} 
\propto \rho(\mathbf{x}) + \bo(\varepsilon)
\end{align}
where $\bo(\cdot)$ is the big-O notation.

We list the continuous counterpart of the kernel construction in Section \ref{sec:FNA} and the current section through integral representations.
\begin{enumerate}
    \item Set a rotation-invariant kernel for all pairs of points $\mathbf{x}, \mathbf{y} \in \manifold$
    \begin{equation} \label{eq:r_invariant_kernel}
    \tilde{c}_{\varepsilon}(\mathbf{x},\mathbf{y}) = \Phi\rbrac{ \frac{1}{\sqrt{\varepsilon}} d_g(\mathbf{x},\mathbf{y}) }
    \end{equation}
    \item Let 
    \begin{equation} \label{eq:q_estimate}
    q_{\varepsilon}(\mathbf{x}) \coloneqq \int k_{\varepsilon}(\mathbf{x},\mathbf{y}) q(\mathbf{y}) \d \mathbf{y}
    \end{equation}
    and the new kernel
    \begin{equation} \label{eq:normalized_kernel}
    \tilde{c}_{\varepsilon}^{(a)}(\mathbf{x}, \mathbf{y}) \coloneqq \frac{\tilde{c}_{\varepsilon}(\mathbf{x}, \mathbf{y})}{q_{\varepsilon}(\mathbf{x})^a q_{\varepsilon}(\mathbf{y})^a} .
    \end{equation}        
\end{enumerate}
Finally, apply the weighted graph Laplacian normalization to this kernel by setting
\begin{equation} \label{eq:normalized_kernel_deg_cts}
d_{\varepsilon}^{(a)}(\mathbf{x}) \coloneqq \int_X k_{\varepsilon}^{(a)}(\mathbf{x}, \mathbf{y}) q(\mathbf{y}) \mathrm{d} \mathbf{y}
\end{equation}
and by defining the anisotropic transition kernel
\begin{equation} \label{eq:transition_kernel}
k_{\varepsilon}^{(a)}(\mathbf{x}, \mathbf{y}) \coloneqq \frac{\tilde{c}_{\varepsilon}^{(a)}(\mathbf{x}, \mathbf{y})}{d_{\varepsilon}^{(a)}(\mathbf{x})} .
\end{equation}

The choice of normalization index $a$ affects the operator limit of $k_{\varepsilon}^{(a)}$ as $\varepsilon \rightarrow 0$ (equation~\ref{eq:normalized_diffusion_matrix}--\ref{eq:MC}).
For $\alpha = 2$, in the case of $a = 1$, the model removes the effects of $\rho$ and thus becomes agnostic of the initial sampling distribution of the tokens/hidden states $\rho$.
In our construction, the underlying manifold of the hidden states is known, making the embedding distribution $\rho$ crucial.
Therefore, we set $a = 0$ for FNA in equation~\ref{eq:frac_attn_weights}.

\subsection{Additional results for text classification} \label{supp_sec:additional_text_classification}

Here we show the results for the case of $\mathbf{Q} \neq \mathbf{K}$ (Fig.~\ref{fig:dynamic_inference_qkv} in correspondence to Fig.~\ref{fig:dynamic_inference_qqv} in the main text.

\subsection{Additional results for machine translate} \label{supp_sec:additional_machine_translate}

We display the training curves (Fig.~\ref{fig:train_translation}) for machine translation in the main text corresponding to Table~\ref{table:train_translate}.

Since the learning rate varies under the \texttt{ReduceLROnPlateau} scheme, we plot its evolution during training (Fig.~\ref{fig:lr_translation}).

\bibliographystyle{unsrtnat-initials}
\bibliography{bibliography}

\vspace{.5em}

\noindent \textbf{Acknowledgments}
This work was supported by the Australian Research Council (grant no. DP160104368).

\vspace{.5em}

\noindent \textbf{Author Contributions} C.K.Q, A.L. and P.G. designed the study, performed the research and wrote the paper.

\end{document}